\PassOptionsToPackage{table}{xcolor}
\documentclass[10pt,a4paper]{article}

\usepackage{a4wide}
\usepackage[T1]{fontenc}
\usepackage{newpxtext}
\usepackage{amsmath,amssymb,amsfonts}
\usepackage{amsthm}
\usepackage{newpxmath}
\usepackage{mathtools}
\usepackage{bm}
\usepackage[scaled=0.85]{inconsolata}
\usepackage{microtype}

\usepackage{booktabs,multirow,array,tabularx}
\usepackage{enumitem}
\usepackage{algorithm}
\usepackage{algpseudocode}
\usepackage{graphicx}
\usepackage{xcolor}
\usepackage{etoolbox}
\usepackage{framed}
\usepackage{mdframed}
\usepackage{float}
\usepackage[section]{placeins}
\ifpdf
  \DeclareGraphicsExtensions{.eps,.pdf,.png,.jpg}
\else
  \DeclareGraphicsExtensions{.eps}
\fi
\usepackage[numbers,sort&compress]{natbib}
\usepackage{tikz}
\usetikzlibrary{arrows.meta,positioning,fit,shapes.geometric,decorations.pathreplacing,calc,backgrounds}
\usepackage{xurl}
\usepackage[colorlinks=true,allcolors=blue!55!black]{hyperref}
\usepackage[nameinlink,capitalise,noabbrev]{cleveref}
\hypersetup{
  pdftitle={Discrete Diffusion Language Models Are Training-Free Multi-Label Classifiers},
  pdfauthor={Pawan Kumar}
}

\newcounter{algorithmicinstance}
\AtBeginEnvironment{algorithmic}{\stepcounter{algorithmicinstance}}
\makeatletter
\providecommand*{\theHALG@line}{}
\renewcommand*{\theHALG@line}{\arabic{algorithmicinstance}.\arabic{ALG@line}}
\makeatother

\definecolor{shadecolor}{gray}{0.95}
\definecolor{theoremshade}{RGB}{247,248,250}
\definecolor{theoremline}{RGB}{203,213,225}
\AtBeginEnvironment{table}{\rowcolors{2}{gray!9}{white}}

\newtheorem{theorem}{Theorem}[section]
\newtheorem{definition}[theorem]{Definition}
\theoremstyle{remark}
\newtheorem{remark}[theorem]{Remark}

\surroundwithmdframed[
  backgroundcolor=theoremshade,
  linecolor=theoremline,
  linewidth=0.6pt,
  roundcorner=3pt,
  skipabove=10pt,
  skipbelow=10pt,
  innerleftmargin=10pt,
  innerrightmargin=10pt,
  innertopmargin=8pt,
  innerbottommargin=8pt
]{theorem}
\surroundwithmdframed[
  backgroundcolor=theoremshade,
  linecolor=theoremline,
  linewidth=0.6pt,
  roundcorner=3pt,
  skipabove=10pt,
  skipbelow=10pt,
  innerleftmargin=10pt,
  innerrightmargin=10pt,
  innertopmargin=8pt,
  innerbottommargin=8pt
]{definition}

\crefname{theorem}{theorem}{theorems}
\crefname{definition}{definition}{definitions}
\crefname{remark}{remark}{remarks}
\crefname{algorithm}{algorithm}{algorithms}

\begin{document}

\newcommand\relatedversion{}
\renewcommand\relatedversion{\thanks{Preprint of the final paper accepted to SIAM SDM 2026. Appendices and full results:
\protect\url{https://github.com/misterpawan/multilabel-classification-dllm-paper.git}}}
\newcommand\aiwranglers{\thanks{For more, visit \protect\href{https://aiwranglers.org/}{aiwranglers.org}, a forum for AI research, training, and applications.}}

\title{Discrete Diffusion Language Models Are Training-Free Multi-Label Classifiers\relatedversion}
\author{Pawan Kumar\aiwranglers\\
\small International Institute of Information Technology, Hyderabad, India\\
\small \texttt{pawan.kumar@iiit.ac.in}\\
\small Webpage: \href{https://aiwranglers.org/}{aiwranglers.org}}
\date{}
\maketitle

\begin{abstract}
We propose dLLM-SetScore, a training-free framework that uses discrete masked-diffusion language models as multi-label text classifiers without task-specific fine-tuning of the diffusion backbone or training on textual-entailment datasets; a small (200-example) labelled validation slice is used only for threshold, temperature, and prompt-template selection.
We evaluate on six datasets (GoEmotions, Reuters-21578, EURLEX57K, ECtHR Task A, Jigsaw Toxic, AAPD) using two masked-diffusion families (LLaDA-8B and Dream-7B) against BART-MNLI, DeBERTa-NLI, Qwen2.5-7B-Instruct, SetFit, and supervised BERT, RoBERTa, and T5.
The paper makes four contributions.
(i) We identify a slot-position asymmetry of the all-masked multi-slot prompt that collapses the alphabetically-first answer slot ($99.4\%$ positive on GoEmotions, $100\%$ on Reuters) and drags macro-F1 toward zero.
(ii) We propose per-label entailment scoring: every label is queried at the same syntactic position, which is permutation-invariant with respect to label ordering and therefore free of slot-position asymmetry.
(iii) On the five datasets shared between the two diffusion families, Instruct checkpoints outperform their corresponding Base checkpoints in most cells (macro-F1 improves on 9 of 10, micro-F1 on 8 of 10). This comparison documents a recurring Base-to-Instruct difference in the evaluated checkpoints, but does not isolate its mechanism.
(iv) We give theory for per-label scoring (permutation invariance, Bayes optimality under weighted Hamming loss, shortlist-imposed ceilings on recall and F1).
Within our evaluation protocol, LLaDA-Instruct with a validation-selected per-label question template records the highest training-free values on the Reuters (micro/macro), ECtHR (micro/macro), Jigsaw (best tuned micro and best tuned macro under different templates), and GoEmotions (micro after prompt tuning) dataset--metric columns. A hybrid ensemble combining BART-MNLI, SetFit (one few-shot supervised component), and LLaDA-Instruct reaches $82.4 / 79.3$ micro/macro F1 on Reuters, within 7 micro-F1 of supervised RoBERTa.
We also explore a local-conditional Joint Set Refinement (JSR) variant; it is empirically harmful from both biased and unbiased seeds and we retain it only as an informative negative result.
\end{abstract}

\section{Introduction.}

Multi-label text classification assigns a \emph{subset} of relevant labels to each document.
The standard recipe is supervised: sigmoid heads on fine-tuned encoders~\cite{xiao2019label,ma2021label}.
Recent work shows diffusion models can also serve as classifiers~\cite{li2023diffusionclassifier,clark2023t2i}, but only for single-label image tasks.
Extending this to multi-label text is non-trivial: $m$ labels yield $2^m$ subsets, and masked-diffusion LMs exhibit training-distribution artefacts when prompted with long all-masked answer suffixes.

We propose \textit{dLLM-SetScore}, a training-free framework for multi-label classification using discrete diffusion LMs.
By \emph{training-free} we mean no task-specific fine-tuning of the diffusion backbone and no training on textual-entailment datasets; a small labelled validation slice (200 examples) is used only for threshold, temperature, and prompt-template selection.
The recipe is simple: for each candidate label, build a short per-label yes/no prompt and read the diffusion model's log-probability of the verbalizer ``yes'' against ``no'' at a single masked answer position.
We additionally study a Joint Set Refinement (JSR) variant that iterates local-conditional updates; we find this empirically harmful in every configuration we tried and retain it only as a negative result and ablation.
We discover that the alternative all-masked multi-slot scoring suffers a strong slot-position asymmetry (the alphabetically-first answer slot collapses to $99.4\%$ positive on GoEmotions and $100\%$ on Reuters), which the per-label prompt removes by placing every label query in the same syntactic position.
With the scorer held fixed, the Instruct checkpoint has higher macro-F1 than its Base counterpart in most evaluated cells. The same pattern appears in both LLaDA-8B and Dream-7B, although this comparison does not identify the source of the difference.

\paragraph{Contributions.}
This paper makes four contributions, which are stated in the same form in the abstract, here, and in the conclusion.
(i) We identify and diagnose a slot-position asymmetry of the all-masked multi-slot scoring on masked-diffusion LMs.
(ii) We propose per-label entailment scoring, in which every label is queried at the same syntactic position; the resulting scorer is permutation-invariant with respect to label ordering and therefore free of slot-position asymmetry, and it recovers Reuters macro-F1 from $10.9$ to $38.2$ at essentially the same micro-F1.
(iii) On the five datasets shared between LLaDA and Dream, Instruct checkpoints outperform their corresponding Base checkpoints in most cells (macro-F1 on 9 of 10 (dataset, family) cells, micro-F1 on 8 of 10). We report this as an empirical checkpoint comparison rather than a causal claim about instruction tuning.
(iv) We give theory for the per-label scorer: permutation invariance, Bayes optimality under threshold-matched weighted Hamming loss, and explicit shortlist-imposed ceilings on recall and F1; a brief paragraph delineates what the theory does and does not show.
Additional result tables, per-label analyses, prompt sweeps, implementation details, and proofs appear in the appendix after the references.

\section{Related Work.}
\label{sec:related}

\textbf{Diffusion classifiers.}
Li et al.~\cite{li2023diffusionclassifier} showed that image diffusion models can classify by comparing per-class reconstruction losses on conditioned denoising trajectories.
Clark and Jaini~\cite{clark2023t2i} extended this to text-to-image diffusion models, demonstrating zero-shot recognition and compositional behaviour.
These approaches work for single-label image classification.
We adapt the idea to multi-label text classification, which introduces exponential subset complexity, label dependencies, and prompt-budget constraints absent from the image setting.
We also identify a new failure mode specific to text masked diffusion: when an all-masked answer suffix is appended to a clean prompt, the first masked position carries the full prompt context and the rest do not, producing a slot-position asymmetry that drags macro-F1 toward zero.

\textbf{Discrete diffusion LMs.}
Discrete-state diffusion was formalised in D3PM~\cite{austin2021structured} with structured transition matrices and absorbing-state corruption.
Campbell et al.~\cite{campbell2022continuous} cast it as a continuous-time reverse Markov chain.
MDLM~\cite{sahoo2024simple} demonstrated that masked-diffusion LMs can be trained with a simplified objective tied to mixtures of MLM losses.
LLaDA~\cite{nie2025large} scaled this to 8B parameters and shipped both Base and Instruct checkpoints.
Dream-7B~\cite{ye2025dream} is initialised from a Qwen2.5-7B autoregressive backbone and fine-tuned with a masked denoising objective, providing a second backbone family with different parent model and tokenizer.
In contrast to lines of work that fine-tune masked-diffusion backbones for discriminative tasks, we use the denoiser \emph{as-is} without any additional task-specific fine-tuning.

\textbf{Multi-label text classification.}
Strong supervised baselines predict labels independently with sigmoid heads on fine-tuned encoders~\cite{xiao2019label,ma2021label}, sometimes with sequence-decoder objectives~\cite{yang2018sgm}.
Few-shot alternatives like SetFit~\cite{tunstall2022efficient} train lightweight heads on sentence embeddings.
Kementchedjhieva and Chalkidis~\cite{kementchedjhieva2023exploration} explored encoder-decoder alternatives for legal and biomedical multi-label classification.
None of these approaches use a generative diffusion backbone.

\textbf{Zero-shot text classification.}
BART-MNLI~\cite{yin2019benchmarking,lewis2020bart} and DeBERTa-NLI score labels via entailment templates trained on MNLI.
These are our primary baselines.
Unlike them, dLLM-SetScore derives evidence from a diffusion masking distribution on a backbone that was never trained on NLI data, so any classification ability emerges from the masked denoising pretraining (and, for Instruct variants, additional general instruction-following supervision); we draw the corresponding distinctions in Section~\ref{sec:experiments}.

\section{Method.}
\label{sec:method}

\subsection{Problem setup and notation.}
Given document $x \in \mathcal{X}$ and label inventory $\mathcal{L} = \{\lambda_1,\ldots,\lambda_m\}$, predict $y \in \{0,1\}^m$.
For each document we either use the full inventory ($\Lambda(x) = \mathcal{L}$) or a document-specific shortlist $\Lambda(x) \subseteq \mathcal{L}$ of size $k = |\Lambda(x)|$; all sums below are over $\Lambda(x)$ with the convention that unselected labels are predicted negative.
We construct a prompt prefix $p(x,\Lambda)$ containing an instruction, the document, and an ordered label list, followed by an answer suffix $a(y) = v(y_1);\ldots;v(y_{|\Lambda|})$ where $v(1)=\texttt{yes}$, $v(0)=\texttt{no}$ are single-token verbalizers verified at backbone load time.
The concatenation $s = p(x,\Lambda) \,\|\, a(y)$ is the full input sequence; $s_{[t]}$ is the token at position $t$, $r_i$ is the absolute position of the $i$-th answer slot, and $\tilde{s}^{(M)}$ is the sequence obtained from $s$ by replacing $s_{[t]}$ with the \texttt{[MASK]} token at every $t \in M \subseteq \{r_1,\ldots,r_{|\Lambda|}\}$.
Notation is summarised in \cref{tab:notation}; symbols introduced later in the theory section ($\Lambda(x), P_+, \rho_{\mathrm{ret}}, \rho_i, \mathcal{I}_+, R_\tau^\star, u_i^\star$) are defined at first use.

\begin{table}[t]
\centering
\scriptsize
\caption{Notation used in the body. Symbols specific to the theory section are defined inline at first use.}
\label{tab:notation}
\begin{tabularx}{\columnwidth}{@{}>{\raggedright\arraybackslash}p{0.29\columnwidth}X@{}}
\toprule
Symbol & Meaning \\
\midrule
$x \in \mathcal{X}$ & input document \\
$X, Y$ & random variables taking values in $\mathcal{X}, \{0,1\}^m$ (realizations $x, y$) \\
$\mathbb{P}$ & joint data-generating distribution over $(X,Y)$ \\
$\mathcal{L}, m$ & label inventory and its size \\
$y \in \{0,1\}^{|\Lambda|}$ & label assignment over shortlist $\Lambda(x)$ \\
$\Lambda(x)$ & shortlist on $x$ (full inventory if no retriever) \\
$p(x,\Lambda)$, $a(y)$ & prompt prefix and answer suffix \\
$r_i$ & token position of the $i$-th answer slot \\
$v^+, v^-$ & single-token verbalizers (\texttt{ yes}, \texttt{ no}) \\
$\tilde{s}^{(M)}$ & sequence with positions $M$ replaced by \texttt{[MASK]} \\
$\ell_{\theta,i}(b;x,y_{-i})$ & local diffusion log-probability at slot $i$ \\
$u_i(x)$ & per-label log-odds score \\
$T, \tau_i$ & calibration temperature and per-label threshold \\
$\sigma$ & sigmoid $\sigma(z) = 1/(1+e^{-z})$ \\
$K, S$ & number of MC mask contexts and JSR sweeps \\
\bottomrule
\end{tabularx}
\end{table}

The local diffusion score $\ell_{\theta,i}(b; x, y_{-i})$ is the expected log-probability that the $i$-th answer slot takes verbalizer $v(b)$ under a random mask set $M$ over the \emph{other} answer slots:
\begin{equation}
\ell_{\theta,i}(b; x, y_{-i}) = \mathbb{E}_{M \sim q_i}\!\left[\log p_\theta\!\left(v(b) \,\big|\, \tilde{s}^{(M\cup\{r_i\})}, r_i\right)\right].
\label{eq:local}
\end{equation}
The masking law $q_i$ first samples $|M|$ uniformly on $\{0,1,\ldots,|\Lambda|-1\}$ then draws $M$ uniformly from subsets of $\{r_j : j\neq i\}$ of that size, following the official LLaDA answer-likelihood routine.
The pseudo-likelihood surrogate $\mathcal{PL}_\theta(y\mid x) := \sum_i \ell_{\theta,i}(y_i; x, y_{-i})$ decomposes the joint score into $|\Lambda|$ scalar local queries; it is not the joint log-likelihood of the multi-slot answer suffix.

\subsection{Per-label entailment scoring (recommended method).}
\label{sec:perlabel}
We use one short prompt per (document, label) pair:
\texttt{Document: $\langle$doc$\rangle$ Question: Does this document express $\langle$label$\rangle$? Answer: [MASK]}, and read the per-label log-odds
\begin{equation}
u_i(x) = \log p_\theta(v^+ \!\mid\! \mathrm{prompt}_i, r_i) - \log p_\theta(v^- \!\mid\! \mathrm{prompt}_i, r_i).
\label{eq:perlabel}
\end{equation}
Every label is queried at the same syntactic position relative to the document and the single masked answer slot, so this layout is permutation-invariant with respect to label ordering and \emph{free of the slot-position asymmetry} we diagnose for all-masked scoring in \cref{sec:experiments}; it is not literally bias-free (prompt-template, lexical, and label-description biases remain, and we study these in \cref{tab:prompt} and \cref{app:perlabel}).
We use a single, consistent term ``per-label entailment scoring'' throughout; we also abbreviate it ``per-label scoring.''
\Cref{fig:scheme} contrasts the two operating modes and \cref{alg:perlabel} gives pseudocode.

\begin{algorithm}[t]
\caption{Per-label entailment scoring (recommended method). Inputs: document $x$, label inventory $\Lambda$, verbalizers $(v^+, v^-) = (\texttt{ yes}, \texttt{ no})$. Output: per-label log-odds vector $u \in \mathbb{R}^{|\Lambda|}$, thresholded as $\widehat y_i = \mathbf{1}[\sigma(u_i/T) \geq \tau_i]$ with $T, \tau$ tuned on the validation slice. Implemented in \texttt{scripts/llada\_per\_label.py}; corresponds to \cref{fig:scheme}(b).}
\label{alg:perlabel}
\begin{shaded}
\begin{algorithmic}[1]
\For{each label $\lambda_i \in \Lambda$}
  \State $s_i \gets$ \texttt{``Document:''} $x$ \texttt{``Question: Does this document express''} $\lambda_i$ \texttt{``? Answer:''} $[\textsc{mask}]$
  \State Run the diffusion backbone on $s_i$; let $r_i$ be the masked position
  \State $u_i \gets \log p_\theta(v^+ \mid s_i, r_i) \;-\; \log p_\theta(v^- \mid s_i, r_i)$
\EndFor
\State \Return $u = (u_1, u_2, \ldots, u_{|\Lambda|})$
\end{algorithmic}
\end{shaded}
\end{algorithm}

\begin{figure}[t]
\centering
\resizebox{0.96\textwidth}{!}{%
\begin{tikzpicture}[
  font=\small,
  node distance=3mm and 4mm,
  prompt/.style={draw=blue!50, fill=blue!4, rounded corners=2pt, inner sep=2pt, minimum height=7mm, align=center, font=\scriptsize},
  doc/.style={draw=black!60, fill=gray!8, rounded corners=2pt, inner sep=2pt, minimum height=7mm, align=center, font=\scriptsize\itshape},
  slot/.style={draw=black!70, rounded corners=1pt, minimum width=8mm, minimum height=5mm, inner sep=1pt, align=center, font=\scriptsize, fill=white},
  slotmask/.style={slot, fill=orange!30},
  slotbias/.style={slot, fill=red!25, draw=red!70, line width=0.6pt},
  slotok/.style={slot, fill=green!18},
  arrow/.style={-{Stealth[length=1.4mm]}, draw=black!70},
  scoreout/.style={draw=black!70, fill=yellow!12, rounded corners=2pt, inner sep=2pt, minimum height=7mm, align=center, font=\scriptsize},
  modebox/.style={draw=black!50, dashed, rounded corners=3pt, inner sep=3pt},
]
\node[doc] (docL) {Document $x$};
\node[prompt, right=8mm of docL] (instrL) {\textbf{Single prompt}\\Labels: $\lambda_1,\lambda_2,\ldots,\lambda_m$\\Answers:};
\draw[arrow] (docL.east) -- (instrL.west);
\node[slotbias, below=8mm of instrL.south, xshift=-15mm] (m1) {[M]$_1$};
\node[slotmask, right=1mm of m1] (m2) {[M]$_2$};
\node[slotmask, right=1mm of m2] (m3) {[M]$_3$};
\node[right=1mm of m3] (mdotsL) {$\cdots$};
\node[slotmask, right=1mm of mdotsL] (mm) {[M]$_m$};
\draw[arrow] (instrL.south) -- ($(m2.north east)+(0,2mm)$);
\node[font=\scriptsize\color{red!80}, below=0.5mm of m1, align=center] (collapse) {slot 1\\$\to$ ``yes''\\99\,\%};
\draw[red!70, dashed] (collapse.north) -- (m1.south);
\node[scoreout, below=20mm of m3, fill=red!8] (outL) {\textbf{All-masked unary}\\$u_i = \log p_\theta(\text{yes}{\mid}\bot) - \log p_\theta(\text{no}{\mid}\bot)$\\\textcolor{red!80}{\scriptsize Slot 1 collapses (Reuters: 100\,\% \texttt{earn})}};
\draw[arrow] (m3.south) -- (outL.north);
\node[modebox, fit=(docL)(instrL)(mm)(outL)(collapse), inner sep=8pt, label=above:{\bfseries (a) All-masked scoring (slot-position artifact)}] (panelL) {};
\node[doc, right=28mm of instrL.east] (docR) {Document $x$};
\node[prompt, right=10mm of docR.east] (p1) {Q: Does $x$ express $\lambda_1$? A:};
\node[prompt, right=10mm of docR.east, yshift=-8mm] (p2) {Q: Does $x$ express $\lambda_2$? A:};
\node[prompt, right=10mm of docR.east, yshift=-16mm] (pm) {Q: Does $x$ express $\lambda_m$? A:};
\draw[arrow] (docR.east) -- (p1.west);
\draw[arrow] (docR.east) -- (p2.west);
\draw[arrow] (docR.east) -- (pm.west);
\node[slotok, right=1mm of p1] (s1) {[M]};
\node[slotok, right=1mm of p2] (s2) {[M]};
\node[slotok, right=1mm of pm] (sm) {[M]};
\node[font=\scriptsize, right=1mm of s1] {$u_1$};
\node[font=\scriptsize, right=1mm of s2] {$u_2$};
\node[font=\scriptsize, right=1mm of sm] {$u_m$};
\node[scoreout, below=16mm of pm, xshift=8mm, fill=green!8] (outR) {\textbf{Per-label entailment}\\$u_i = \log p_\theta(\text{yes}{\mid}\text{prompt}_i) - \log p_\theta(\text{no}{\mid}\text{prompt}_i)$\\\textcolor{green!50!black}{\scriptsize Same syntactic position for every $\lambda_i$}};
\node[modebox, fit=(docR)(p1)(pm)(sm)(outR), inner sep=8pt, label=above:{\bfseries (b) Per-label entailment (no slot-position asymmetry)}] (panelR) {};
\node[draw=black!50, fill=blue!6, rounded corners=2pt, inner sep=4pt, below=10mm of $(outL.south)!0.5!(outR.south)$, align=center, font=\small] (cal)
  {Calibration $T,\tau_i$ on val slice $\Rightarrow$ predictions $\widehat y_i = \mathbf{1}[\sigma(u_i/T) \ge \tau_i]$};
\draw[arrow] (outL.south) -- (cal.north);
\draw[arrow] (outR.south) -- (cal.north);
\end{tikzpicture}}
\caption{The two diffusion-scoring modes compared by dLLM-SetScore. \textbf{(a) All-masked unary scoring} packs all labels and all answer slots into a single prompt; the masked-diffusion training distribution is far from long all-mask suffixes, so the alphabetically-first answer slot (red) collapses to $99.4\%$ positive on GoEmotions and $100\%$ positive on Reuters (\cref{fig:bias}). \textbf{(b) Per-label entailment scoring} runs $|\Lambda|$ short prompts, each with a single masked answer slot in identical syntactic position; this layout is permutation-invariant with respect to label ordering and removes the slot-position asymmetry of (a) (but not prompt-template or lexical biases). Both modes share the same downstream calibration step. The per-label mode is our recommended default and is the operating mode for every LLaDA-I and Dream-I row in \cref{tab:main}.}
\label{fig:scheme}
\end{figure}

\subsection{Joint Set Refinement (JSR, exploratory).}
\label{sec:jsr}
Starting from the per-label seed $\widehat y^{(0)}_i = \mathbf{1}[\sigma(u_i/T) \geq \tau_i]$, a natural-looking refinement is best-response dynamics on the per-coordinate local conditionals:
\begin{equation}
y_i^{(t+1)} = \arg\max_{b \in \{0,1\}} \ell_{\theta,i}\!\left(b;\, x,\, y^{(t)}_{-i}\right).
\label{eq:jsr}
\end{equation}
\emph{This local update is \textbf{not} coordinate ascent on the surrogate $\mathcal{PL}_\theta$.}
Because $y_i$ appears inside $y_{-j}$ for every other term, changing $y_i$ perturbs every other $\ell_{\theta,j}(y_j; x, y_{-j})$, and maximising the single local term $\ell_{\theta,i}$ need not increase the sum.
A variant that does enjoy a monotone-non-decrease guarantee replaces the local update by the full-gain update
$y_i^{(t+1)} \in \arg\max_{b} \mathcal{PL}_\theta(y_1,\ldots,y_{i-1},b,y_{i+1},\ldots,y_{|\Lambda|}\mid x)$.
Full-gain JSR is monotone for the pseudo-likelihood surrogate by construction; with a tie-stable update rule that keeps the current coordinate unchanged whenever it is already optimal, repeated sweeps terminate at a coordinate-wise local optimum (\cref{app:fullgain}).
We did not run the full-gain variant in the main experiments because it costs $|\Lambda|$ extra local queries per coordinate step and pilot runs were not promising; we keep it here as a theoretical reference variant rather than our main practical method.
We retain local-JSR (Eq.~\ref{eq:jsr}) only as an exploratory refinement heuristic and a negative result: it is not coordinate ascent on the full surrogate, empirically degrades test F1 from both biased and unbiased seeds (\cref{tab:sweeps,tab:jsr_perlabel}), and a two-label counter-example showing local-JSR can decrease $\mathcal{PL}_\theta$ is in \cref{app:jsr_counterex}.

\subsection{Shortlisting for large label spaces.}
When $m$ is large (e.g.\ EURLEX57K with $100$ EUROVOC concepts), the full label list may not fit in a single prompt.
We construct a document-specific shortlist $\Lambda(x)$ of size $k$ using a lightweight SBERT retriever: embed each label description and the document, take the $k$ nearest labels by cosine similarity.
All prompt-based methods (BART-MNLI, DeBERTa-NLI, LLaDA, Dream, Qwen2.5-7B-Instruct) share the same shortlist and the same evaluation protocol on EURLEX, for fairness; AAPD uses the full $54$-label inventory and is therefore \emph{not} a retriever-shortlisted dataset.
We call a dataset \emph{retriever-capped} when candidate shortlisting excludes a non-trivial fraction of true labels, inducing a hard recall and F1 ceiling on every prompt-based method that scores only the shortlist; among our six datasets only EURLEX57K is retriever-capped (the SBERT $k{=}32$ shortlist recovers $31.2\%$ of gold labels per document, capping prompt-based micro-F1 at $\sim$47\%; see \cref{tab:shortlist_recall}).

\subsection{Calibration.}
\label{sec:calibration}
Raw diffusion log-odds are not calibrated probabilities.
The 200-example labelled validation slice is used \emph{only} for: (a) selection of the threshold strategy, (b) temperature $T$, (c) prompt-template selection on the small per-dataset sweep, and (d) ensemble blend weight tuning.
We sweep three threshold strategies on the validation slice $V$:
a single \emph{global} threshold $\tau$ (the special case $\tau_i \equiv \tau$);
\emph{per-label} thresholds $\tau_i$ chosen independently per label;
and an \emph{expected-cardinality} threshold, defined as the $\tau$ for which the average number of predicted positive labels on $V$ matches the average gold label cardinality,
\[
\tfrac{1}{|V|}\textstyle\sum_{x\in V}\sum_i \mathbf{1}[\hat p_i(x)\ge\tau]
\;\approx\;
\tfrac{1}{|V|}\sum_{(x,y)\in V}\sum_i y_i.
\]
The reported numbers use whichever strategy maximises validation micro-F1 (\texttt{auto} mode in our code).
The same auto-calibration is applied to all baselines.

\paragraph{Ensemble blend weights.}
The hybrid ensemble of \cref{tab:ablation} combines the per-label posteriors of BART-MNLI, SetFit, and LLaDA-Instruct by a convex mixture $\hat p_i(x) = \sum_k w_k\, \hat p^{(k)}_i(x)$ with $w_k\!\ge\!0$, $\sum_k w_k\!=\!1$.
We enumerate a coarse grid of $(w_k)$ (singletons, pairs at $\{0.25,0.5,0.75\}$, equal-weight triples) and, for each grid point, re-tune the global threshold on the 200-example validation slice using the same \texttt{auto} calibrator as every other row; the grid point with the highest validation micro-F1 is reported.
No gradient-based or learned weighting is used, so the only tunable degrees of freedom are the $\sim\!20$ weight choices and one threshold per choice.

\section{Theoretical Properties of Per-Label Scoring.}
\label{sec:theory}

We analyse the per-label scorer in three steps: (i) permutation invariance with respect to label ordering, (ii) Bayes-optimal coordinate decisions under a threshold-matched weighted Hamming loss, and (iii) shortlist-imposed ceilings on recall and F1.
We state results in the body and defer all proofs to \cref{app:proofs}.
Throughout, $(X,Y)\sim\mathbb{P}$ with $Y\in\{0,1\}^m$, $u_i(x)$ is the per-label log-odds (Eq.~\ref{eq:perlabel}), $\hat p_i(x):=\sigma(u_i(x)/T)$ after temperature scaling, and $\hat y_i^\tau(x):=\mathbf{1}[\hat p_i(x)\ge \tau_i]$ for thresholds $\tau_i\in(0,1)$.
The scope is the per-label scorer; the theory does not directly establish optimality for micro- or macro-F1, although it does provide shortlist-imposed upper bounds on recall and F1 (see \cref{rmk:theory-scope}).

\subsection{Permutation invariance.}
\begin{theorem}[Permutation invariance of per-label scores]
\label{thm:perm}
Suppose there exists a prompt constructor $\psi:\mathcal{X}\times\mathcal{L}\to\mathcal{S}$ and a deterministic score extractor $F_\theta:\mathcal{S}\to\mathbb{R}$ such that $u_i(x)=F_\theta(\psi(x,\lambda_i))$ for every $i$.
For any permutation $\pi$ of $\{1,\dots,m\}$, if the threshold attached to each label is preserved under reordering, then the predicted label set $\widehat{\mathcal{Y}}^\tau(x):=\{\lambda_i:\hat y_i^\tau(x)=1\}$ is invariant to $\pi$.
\end{theorem}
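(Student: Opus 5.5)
The plan is to formalise what ``reordering'' does to the inputs of the scorer, and then check that the predicted set is built only from quantities that do not see the ordering. Fix a permutation $\pi$ and let the reordered inventory be $\mathcal{L}^\pi=(\lambda'_1,\dots,\lambda'_m)$ with $\lambda'_j:=\lambda_{\pi(j)}$. The hypothesis that thresholds are preserved under reordering we read as $\tau'_j=\tau_{\pi(j)}$. Equivalently, the threshold is a function $\lambda\mapsto\tau(\lambda)$ of the label itself rather than of its index.

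\textbf{Step 1: scores are label-indexed.} Under the reordering the scorer computes $u'_j(x)=F_\theta(\psi(x,\lambda'_j))$. By hypothesis, both $\psi$ and $F_\theta$ take as arguments only the document and a single label. Hence
\[
u'_j(x)=F_\theta(\psi(x,\lambda_{\pi(j)}))=u_{\pi(j)}(x).
\]
This is the only place the structure of the per-label prompt enters. The prompt constructor receives no index, no label list and no other slot, and $F_\theta$ is deterministic, so the same string always produces the same number. I expect this step to be the main obstacle, and the difficulty is conceptual rather than technical. The statement must rule out hidden dependence on position; for the all-masked scorer such dependence exists, through the position $r_i$ and the order of the label list inside $p(x,\Lambda)$. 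Here I would stress that $\psi(x,\lambda_i)$ is literally the string of \cref{alg:perlabel}. Its masked position is fixed relative to the prompt, and determinism means no Monte Carlo masks or sampling noise, so any MC component would have to be fixed or seeded by $(x,\lambda)$.

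\textbf{Step 2: propagate through calibration and thresholding.} The temperature $T$ is a single scalar shared across labels, so $\hat p'_j(x)=\sigma(u'_j(x)/T)=\hat p_{\pi(j)}(x)$. Combined with $\tau'_j=\tau_{\pi(j)}$, this gives $\hat y'^{\tau'}_j(x)=\hat y^\tau_{\pi(j)}(x)$.

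\textbf{Step 3: compare sets.} We have
\[
\widehat{\mathcal{Y}}'(x)=\{\lambda'_j:\hat y'_j=1\}=\{\lambda_{\pi(j)}:\hat y_{\pi(j)}=1\}.
\]
Because $\pi$ is a bijection, reindexing by $i=\pi(j)$ turns this into $\{\lambda_i:\hat y_i=1\}=\widehat{\mathcal{Y}}^\tau(x)$.

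Finally I would remark on scope. The conclusion is about the label set, so the indicator vector itself is permuted rather than invariant. The global and expected-cardinality threshold strategies satisfy the threshold hypothesis automatically. Per-label thresholds satisfy it whenever they are tuned per label identity.
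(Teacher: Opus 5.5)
Your proposal is correct and follows essentially the same route as the paper's proof: derive $u^\pi_j(x)=F_\theta(\psi(x,\lambda_{\pi(j)}))=u_{\pi(j)}(x)$ from the per-label form of the scorer, push this through the shared temperature and the label-attached thresholds to get $\hat y^{\tau,\pi}_j(x)=\hat y^{\tau}_{\pi(j)}(x)$, and conclude by reindexing through the bijection $\pi$. Your explicit set-reindexing step and your scope remarks are somewhat more careful than the paper's one-line conclusion, but they do not change the argument. Those remarks are that the indicator vector is permuted rather than invariant, and that global and expected-cardinality thresholds satisfy the hypothesis automatically.
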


\Cref{thm:perm} formalises the symmetry that the per-label construction enforces and rules out dependence on a label's slot index in a global answer suffix; it does not rule out lexical biases from the label text itself or from the chosen question template (we study those empirically in \cref{tab:prompt} and \cref{app:perlabel}).

\subsection{Bayes optimality under threshold-matched weighted Hamming.}
\begin{definition}[Threshold-matched weighted Hamming loss]
\label{def:weighted-hamming}
For $\tau\in(0,1)^m$, define
$\ell_\tau(y,\hat y) := \tfrac{1}{m}\sum_i \big(\tau_i \mathbf{1}\{y_i=0,\hat y_i=1\} + (1-\tau_i)\mathbf{1}\{y_i=1,\hat y_i=0\}\big)$
and $R_\tau(\hat y):=\mathbb{E}[\ell_\tau(Y,\hat y(X))]$.
With $\tau_i\equiv\tfrac12$, $R_\tau$ equals one half of ordinary Hamming risk.
\end{definition}

\begin{theorem}[Bayes rule and excess-risk bound]
\label{thm:bayes-regret}
Let $\eta_i(x):=\Pr(Y_i=1\mid X=x)$.
The Bayes-optimal decision under $R_\tau$ is $y_{\tau,i}^\star(x)=\mathbf{1}[\eta_i(x)\ge\tau_i]$, and the thresholded predictor $\hat y^\tau$ satisfies the excess-risk bound
\[
R_\tau(\hat y^\tau)-R_\tau^\star \;\le\; \frac{1}{m}\sum_{i=1}^m \mathbb{E}\!\left[|\hat p_i(X)-\eta_i(X)|\right],
\]
where $R_\tau^\star := R_\tau(y_\tau^\star)$.
In particular, taking $\tau_i\equiv\tfrac12$ recovers the standard $\frac{2}{m}\sum_i \mathbb{E}|\hat p_i(X)-\eta_i(X)|$ bound for ordinary Hamming risk.
A logit-space restatement using $\sigma' \le \tfrac14$ replaces $|\hat p_i-\eta_i|$ by $\tfrac14|u_i/T-u_i^\star|$ where $u_i^\star(x):=\operatorname{logit}(\eta_i(x))$.
\end{theorem}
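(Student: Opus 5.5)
The plan is to reduce everything to a pointwise, per-coordinate computation of conditional risk, and then integrate over $X$. Fix $x$ and a coordinate $i$. Because $\ell_\tau$ is a sum over coordinates and $\hat y_i$ depends only on $x$, conditioning on $X=x$ gives
\[
\mathbb{E}[\ell_\tau(Y,\hat y(x))\mid X=x]=\tfrac1m\sum_i c_i(\hat y_i;x),
\qquad c_i(1;x)=\tau_i(1-\eta_i(x)),\quad c_i(0;x)=(1-\tau_i)\eta_i(x).
\]
This uses only the marginals $\eta_i$; label dependencies play no role, since the loss is additive.

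\textbf{Bayes rule.} Subtracting the two costs gives $c_i(1;x)-c_i(0;x)=\tau_i-\eta_i(x)$. Predicting $1$ is therefore (weakly) optimal exactly when $\eta_i(x)\ge\tau_i$. This yields $y^\star_{\tau,i}(x)=\mathbf{1}[\eta_i(x)\ge\tau_i]$; ties are broken toward $1$ without loss. Since the conditional risk is minimised separately in each coordinate and at each $x$, taking expectations shows that $y^\star_\tau$ attains $R^\star_\tau$ among all measurable decision rules.

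\textbf{Excess risk.} For any rule $\hat y$, the same difference identity gives the pointwise excess
\[
c_i(\hat y_i;x)-c_i(y^\star_{\tau,i};x)=|\eta_i(x)-\tau_i|\,\mathbf{1}[\hat y_i(x)\ne y^\star_{\tau,i}(x)].
\]
The key step, and the only place needing care, is to show that whenever $\hat y^\tau_i(x)\ne y^\star_{\tau,i}(x)$, the threshold $\tau_i$ lies weakly between $\hat p_i(x)$ and $\eta_i(x)$. Then $|\eta_i-\tau_i|\le|\hat p_i-\eta_i|$. I would check the two disagreement cases explicitly, including the boundary ties:
\begin{itemize}
\item If $\hat p_i\ge\tau_i>\eta_i$, the claim holds directly.
\item If $\eta_i\ge\tau_i>\hat p_i$, it also holds directly.
\item In the tie case $\hat p_i=\tau_i$, the distance $|\eta_i-\tau_i|$ equals $|\hat p_i-\eta_i|$ exactly.
\end{itemize}
The main obstacle I anticipate is exactly this tie bookkeeping, since both rules use a non-strict inequality.

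Summing over $i$, dividing by $m$, and taking expectation over $X$ gives the stated bound. For $\tau_i\equiv\frac12$, the definition gives $R_\tau=\frac12 R_{\mathrm{Ham}}$ for every rule. The Hamming-risk bound then follows by multiplying both sides by $2$, which explains the factor $\frac{2}{m}$.

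\textbf{Logit form.} Write $\hat p_i=\sigma(u_i/T)$ and $\eta_i=\sigma(u_i^\star)$. The mean value theorem together with $\sup\sigma'=\frac14$ gives $|\hat p_i-\eta_i|\le\frac14|u_i/T-u_i^\star|$ pointwise, and this substitutes directly into the bound. Where $\eta_i(x)\in\{0,1\}$, the logit $u_i^\star$ is infinite. I would note that the inequality then holds trivially with right-hand side $+\infty$, or alternatively restrict the logit form to $\eta_i\in(0,1)$.
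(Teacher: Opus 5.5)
Your proposal is correct and follows essentially the same route as the paper's proof. Both compute the per-coordinate conditional risks $\tau_i(1-\eta_i)$ versus $(1-\tau_i)\eta_i$, both bound each disagreement by noting that $\hat p_i$ and $\eta_i$ lie on opposite sides of $\tau_i$, and both finish by averaging, rescaling via $R_{1/2}=\tfrac12 R_H$, and applying the $\sigma'\le\tfrac14$ Lipschitz bound (the paper likewise restricts the logit form to $\eta_i\in(0,1)$). Your explicit identity $c_i(\hat y_i;x)-c_i(y^\star_{\tau,i};x)=|\eta_i(x)-\tau_i|\,\mathbf{1}[\hat y_i(x)\ne y^\star_{\tau,i}(x)]$ makes precise a step the paper only asserts, and your separate tie bullet is already covered by the first disagreement case.
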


\Cref{thm:bayes-regret} delivers two useful properties: a pointwise stability statement (if probability error is smaller than the Bayes margin to the threshold, the decision is correct) and a global excess-risk bound controlled by average probability-estimation error.
The per-label thresholds $\tau_i$ generalise the global threshold $\tau$ as the special case $\tau_i\equiv\tau$.

\subsection{Shortlist-imposed ceilings.}
Let $\Lambda(X)$ be the (random) shortlist on $X$.
To state the ceilings, we define the population micro counts of a predictor $\hat y$ by
\begin{align*}
TP_\mu(\hat y)&:=\textstyle\sum_{i=1}^m \Pr(\hat y_i(X)=1, Y_i=1),\\
FP_\mu(\hat y)&:=\textstyle\sum_{i=1}^m \Pr(\hat y_i(X)=1, Y_i=0),\\
FN_\mu(\hat y)&:=\textstyle\sum_{i=1}^m \Pr(\hat y_i(X)=0, Y_i=1),
\end{align*}
and the total positive mass $P_+:=\sum_{i=1}^m \Pr(Y_i=1)$, assuming $P_+>0$.
The corresponding population micro-recall and micro-F1 are $\mathrm{Rec}_\mu(\hat y):=TP_\mu(\hat y)/P_+$ and $F_{1,\mu}(\hat y):=2TP_\mu(\hat y)/(2TP_\mu(\hat y)+FP_\mu(\hat y)+FN_\mu(\hat y))$.
For macro-F1, let $\mathcal{I}_+:=\{i:\Pr(Y_i=1)>0\}$, define the label-wise counts $TP_i,FP_i,FN_i$ and $F_{1,i}(\hat y):=2TP_i/(2TP_i+FP_i+FN_i)$ analogously, and set $F_{1,\mathrm{macro}}(\hat y):=|\mathcal{I}_+|^{-1}\sum_{i\in\mathcal{I}_+} F_{1,i}(\hat y)$.
Finally, let $\rho_{\mathrm{ret}}:=P_+^{-1}\sum_i \Pr(Y_i=1, i\in\Lambda(X))$ be the retriever's positive retention rate and $\rho_i:=\Pr(i\in\Lambda(X)\mid Y_i=1)$ be its per-label retention rate.

\begin{theorem}[Shortlist ceilings for recall and F1]
\label{thm:retrieval}
Assume the predictor satisfies $\hat y_i(X)=0$ whenever $i\notin\Lambda(X)$.
Then every such predictor satisfies
\[
\mathrm{Rec}_\mu(\hat y) \le \rho_{\mathrm{ret}}, \qquad
F_{1,\mu}(\hat y) \le \frac{2\rho_{\mathrm{ret}}}{1+\rho_{\mathrm{ret}}},
\]
\[
F_{1,\mathrm{macro}}(\hat y) \le \frac{1}{|\mathcal{I}_+|} \sum_{i\in\mathcal{I}_+} \frac{2\rho_i}{1+\rho_i}.
\]
In addition, the Hamming risk decomposes into a retriever-only term $\tfrac{1}{m}\sum_i \Pr(Y_i=1, i\notin\Lambda(X))$ plus a scorer-controlled term, formalising the intuition that part of the error is irreducibly determined by retrieval alone.
\end{theorem}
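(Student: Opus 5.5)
The plan is to prove each inequality in \cref{thm:retrieval} by bounding true positives with the mass of positives that survive the shortlist, and then to use monotonicity of the F1 map in that mass. First, recall: since $\hat y_i(X)=0$ whenever $i\notin\Lambda(X)$, the event $\{\hat y_i(X)=1,Y_i=1\}$ is contained in $\{Y_i=1,\ i\in\Lambda(X)\}$. Hence
\[
TP_\mu(\hat y)\le \sum_i \Pr(Y_i=1,\ i\in\Lambda(X)) = \rho_{\mathrm{ret}}P_+ ,
\]
and dividing by $P_+$ gives $\mathrm{Rec}_\mu\le\rho_{\mathrm{ret}}$.

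For micro-F1, I will rewrite it in terms of $TP$ and $FP$. The identity $TP_\mu+FN_\mu=P_+$ holds because each positive is either predicted or missed. So
\[
F_{1,\mu}=\frac{2TP_\mu}{TP_\mu+FP_\mu+P_+}\le \frac{2TP_\mu}{TP_\mu+P_+},
\]
using $FP_\mu\ge 0$. The map $t\mapsto 2t/(t+P_+)$ is increasing on $[0,\infty)$, since its derivative is $2P_+/(t+P_+)^2>0$. Substituting $t\le\rho_{\mathrm{ret}}P_+$ therefore gives $2\rho_{\mathrm{ret}}/(1+\rho_{\mathrm{ret}})$. The macro bound follows from the same argument applied label by label. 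For $i\in\mathcal{I}_+$ we have $TP_i\le \Pr(Y_i=1,\ i\in\Lambda(X))=\rho_i\Pr(Y_i=1)$ and $TP_i+FN_i=\Pr(Y_i=1)>0$. The same monotonicity then gives $F_{1,i}\le 2\rho_i/(1+\rho_i)$, and averaging over $\mathcal{I}_+$ finishes. One convention needs care: if $TP_i=FP_i=0$, the denominator is still positive because $FN_i=\Pr(Y_i=1)>0$, so $F_{1,i}$ is well defined and equal to $0$.

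For the Hamming decomposition, I will split the per-label error probability $\Pr(\hat y_i\ne Y_i)$ according to whether $i\in\Lambda(X)$. Off the shortlist the prediction is forced to $0$, so the error there is exactly $\Pr(Y_i=1,\ i\notin\Lambda(X))$, which depends only on the retriever. On the shortlist the error is $\Pr(\hat y_i\ne Y_i,\ i\in\Lambda(X))$, which is the scorer-controlled term. Averaging over $i$ with the factor $1/m$ gives the stated decomposition. The same split also works verbatim for the weighted loss of \cref{def:weighted-hamming}, with weight $(1-\tau_i)$ on the retriever term.

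I do not expect a real obstacle. The only delicate points are (a) making the identity $TP+FN=P_+$ explicit, so that $FN$ is eliminated before invoking monotonicity, and (b) handling the $0/0$ conventions for labels outside $\mathcal{I}_+$, which the definition excludes from the macro average. Everything else is event inclusion and monotonicity of $t\mapsto 2t/(t+c)$.
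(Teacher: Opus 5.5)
Your proposal is correct and follows the paper's proof essentially step for step. Both bound $TP$ via the containment $\{\hat y_i=1,Y_i=1\}\subseteq\{i\in\Lambda(X),Y_i=1\}$, eliminate $FN$ using $TP_\mu+FN_\mu=P_+$ (and label-wise $TP_i+FN_i=\Pr(Y_i=1)$), drop $FP$, apply monotonicity of $t\mapsto 2t/(t+c)$, and split the Hamming error on whether $i\in\Lambda(X)$. Your added remarks are also correct refinements that the paper leaves implicit: the macro denominators are positive for $i\in\mathcal{I}_+$, and in the weighted-loss version the retriever term carries the weight $1-\tau_i$.
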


\begin{remark}[What the theory does and does not show]
\label{rmk:theory-scope}
\Cref{thm:perm} establishes permutation invariance of per-label scoring with respect to label ordering, ruling out slot-position asymmetry but not lexical or template biases.
\Cref{thm:bayes-regret} establishes Bayes optimality and finite-sample-style excess-risk control under a threshold-matched weighted Hamming objective; with $\tau_i\equiv\tfrac12$ this recovers ordinary Hamming-risk consistency.
\Cref{thm:retrieval} bounds recall, micro-F1, and macro-F1 from above when shortlisting is used.
The theory does \emph{not} directly prove optimality for micro- or macro-F1 (which are non-decomposable across labels and across documents), and does not analyse the all-masked multi-slot prompt or local-JSR.
We treat the empirical results as the primary contribution and the theory as a structural complement.
\end{remark}

\section{Experiments.}
\label{sec:experiments}

\subsection{Setup.}
All experiments run on a single NVIDIA RTX 5090 (32\,GB, sm\_120) with PyTorch~2.11, CUDA~12.8, transformers~4.49, and bfloat16 inference.
The dLLM-SetScore implementation, dataset loaders, baselines, calibration utilities, ensemble blender, and prompt-sweep driver are released as a modular Python package; the per-label entailment scorer (\cref{alg:perlabel}) is in \texttt{scripts/llada\_per\_label.py} and every result row in \cref{tab:main} has a one-line CLI invocation.

\textbf{Diffusion backbones.}
We use LLaDA-8B Base and Instruct (LLaDA-B, LLaDA-I; \path|GSAI-ML/LLaDA-8B-{Base,Instruct}|)~\cite{nie2025large} as the primary backbone, and Dream-7B Base and Instruct (Dream-B, Dream-I; \path|Dream-org/Dream-v0-{Base,Instruct}-7B|)~\cite{ye2025dream} as a second backbone family.
Dream-7B is initialised from Qwen2.5-7B and fine-tuned with a masked denoising objective. Its different parent model and tokenizer provide a second checkpoint family for the Base-to-Instruct comparison (\cref{tab:dream}), but not an architecture-controlled causal test.

\textbf{Baselines.}
\emph{BART-MNLI}~\cite{lewis2020bart} uses \path|facebook/bart-large-mnli| through the \path|zero-shot-classification| pipeline with \texttt{multi\_label=True}; the template-tuned variant searches three hypothesis templates on the validation slice.
\emph{DeBERTa-NLI} uses \texttt{MoritzLaurer/\allowbreak{}DeBERTa-v3-base-mnli-fever-anli}~\cite{he2021deberta}.
\emph{Qwen2.5-7B-Instruct} (autoregressive 7B LLM) is prompted to emit applicable labels as comma-separated text; on EURLEX it sees the same SBERT $k{=}32$ shortlist as the other prompt-based methods and on AAPD it sees the full $54$-label inventory, exactly matching the LLaDA / Dream / BART-MNLI evaluation protocol.
\emph{SetFit}~\cite{tunstall2022efficient} uses \texttt{all-MiniLM-L6-v2} with $k{=}8$ per label ($k{=}4$ for high-cardinality sets); it is a few-shot supervised method (lightweight head trained on labelled examples) and is therefore \emph{not} training-free.
\emph{Supervised BERT / RoBERTa} fine-tune \texttt{bert-base-uncased} and \texttt{roberta-base} for $2$ epochs, batch $16$, lr $2\mathrm{e}{-5}$. \emph{T5 text-to-set} fine-tunes \texttt{flan-t5-base}. On EURLEX, ECtHR, Jigsaw, and AAPD, we observed unreliable sequence decoding for multi-label generation in our setup, so we do not report those results.
All methods use the same test slices ($800$--$1500$ examples per dataset) and the same auto-calibration; full configs in \cref{app:setup_details}.

\textbf{Datasets} (\cref{tab:datasets}): GoEmotions~\cite{demszky2020goemotions} ($28$ labels); Reuters-21578~\cite{debole2005analysis} ($20$ labels, ModApte top-$20$); EURLEX57K~\cite{chalkidis2019eurlex} ($100$ EUROVOC labels, $k{=}32$ shortlist; the only retriever-capped dataset); ECtHR Task A~\cite{chalkidis2021ecthr} ($10$ articles); Jigsaw Toxic~\cite{jigsaw2018} ($6$ labels); AAPD~\cite{yang2018sgm} ($54$ CS subject codes, full inventory used; \emph{not} retriever-capped).

\begin{table}[t]
\centering
\scriptsize
\caption{Dataset summary. $|\mathcal{L}|$ = number of labels, $\bar c$ = avg labels/doc in the test slice, $k$ = SBERT shortlist size (--- = full inventory used, no retriever cap).}
\label{tab:datasets}
\begin{tabular}{lrrrrr}
\toprule
Dataset & $|\mathcal{L}|$ & train & test & $\bar c$ & $k$ \\
\midrule
GoEmotions & 28 & 43k & 1.5k & 1.8 & --- \\
Reuters-21578 & 20 & 7.8k & 1.0k & 1.2 & --- \\
EURLEX57K & 100 & 45k & 0.8k & 5.3 & 32 \\
ECtHR Task A & 10 & 9.0k & 1.0k & 1.7 & --- \\
Jigsaw Toxic & 6 & 128k & 1.5k & 0.2 & --- \\
AAPD & 54 & 54k & 1.0k & 2.4 & --- \\
\bottomrule
\end{tabular}
\end{table}

\textbf{Calibration and seeds.}
Reported numbers use the auto strategy (\cref{sec:calibration}; $\tau \in \{0.1,\ldots,0.9\}$, $T \in \{0.5,0.75,1,1.25,1.5,2\}$).
Headline cells use \texttt{DLLM\_SEED=13}; multi-seed ($17, 23$) replication is in \cref{app:multiseed}.
We always report (micro, macro) F1 in this fixed order in text, tables, and captions.

\subsection{Main results.}

\begin{table}[t]
\centering
\scriptsize
\caption{Main results on the six datasets (micro-F1 / macro-F1, percent). Unless noted otherwise, headline cells are reported with a single fixed seed (\texttt{DLLM\_SEED}=13); multi-seed ($17,23$) replications are in \cref{app:multiseed}. Prompt-tuned rows are selected using the 200-example validation split. \textbf{Bold} marks the best \emph{training-free} entry per column; supervised rows have access to training labels and are shown as upper bounds. EURLEX57K is the only retriever-capped dataset; AAPD uses the full $54$-label inventory and is \emph{not} retriever-capped (SetFit's advantage on AAPD is from supervised contrastive training, not from a retriever bottleneck). ``---'' marks cells we did not run because they were uninformative. For GoEmotions the reported prompt-tuned micro/macro pair comes from a single selected row of \cref{tab:prompt}; for Jigsaw the best micro and the best macro arise from different prompt templates (``contains'' and ``classified as'' respectively) and are discussed separately in the text rather than reported as a single paired operating point.}
\label{tab:main}
\begin{tabular}{l *{6}{c}}
\toprule
Method & GoEmotions & Reuters & EURLEX & ECtHR & Jigsaw & AAPD \\
\midrule
\multicolumn{7}{l}{\textit{Training-free}} \\
BART-MNLI default & 12.7/13.2 & 68.2/61.0 & 8.8/6.3 & 29.1/22.7 & 33.3/\textbf{27.2} & 7.3/5.8 \\
BART-MNLI template & 21.4/22.2 & \textbf{77.0}/65.4 & 8.2/6.2 & 27.5/21.0 & 15.7/14.6 & 6.1/4.6 \\
DeBERTa-NLI zero-shot & 14.7/14.9 & 35.9/37.5 & 9.0/7.9 & 26.2/20.4 & 14.9/14.5 & 3.4/1.9 \\
Qwen2.5-7B-Inst zero-shot & \textbf{27.3}/\textbf{25.6} & 69.7/60.9 & 9.4/8.9 & 40.3/38.3 & 10.8/10.4 & 5.5/3.7 \\
LLaDA-B per-label & 19.9/15.1 & 40.6/38.2 & 9.4/9.0 & 34.6/29.2 & 15.8/9.8 & 9.2/8.2 \\
LLaDA-I per-label & 26.6/22.4 & 60.6/\textbf{67.2} & 11.9/9.2 & \textbf{48.8}/\textbf{43.3} & 37.9/20.5 & 8.5/8.4 \\
Dream-B per-label & 19.3/13.4 & 63.7/58.1 & 11.4/9.5 & 42.9/37.4 & 22.7/14.9 & 11.3/7.8 \\
Dream-I per-label & 26.6/19.3 & 62.5/64.8 & 9.1/8.0 & 43.2/38.7 & \textbf{39.5}/23.1 & 9.0/7.8 \\
\midrule
\multicolumn{7}{l}{\textit{Few-shot supervised}} \\
SetFit few-shot & 18.3/16.4 & 64.5/63.2 & 39.7/27.5 & 39.9/30.9 & 13.2/4.5 & 32.8/27.7 \\
\midrule
\multicolumn{7}{l}{\textit{Supervised upper bounds}} \\
BERT + sigmoid & 39.2/7.1 & 83.5/34.0 & 24.0/3.6 & 63.9/49.4 & 73.6/40.4 & 67.9/43.4 \\
RoBERTa + sigmoid & 43.3/9.1 & 89.4/77.6 & 24.2/3.2 & 66.9/52.3 & 72.6/41.0 & 66.8/45.5 \\
T5 text-to-set & 57.1/44.5 & 93.0/85.1 & --- & --- & --- & --- \\
\bottomrule
\end{tabular}
\end{table}

\begin{figure}[!t]
\centering
\includegraphics[width=0.98\textwidth]{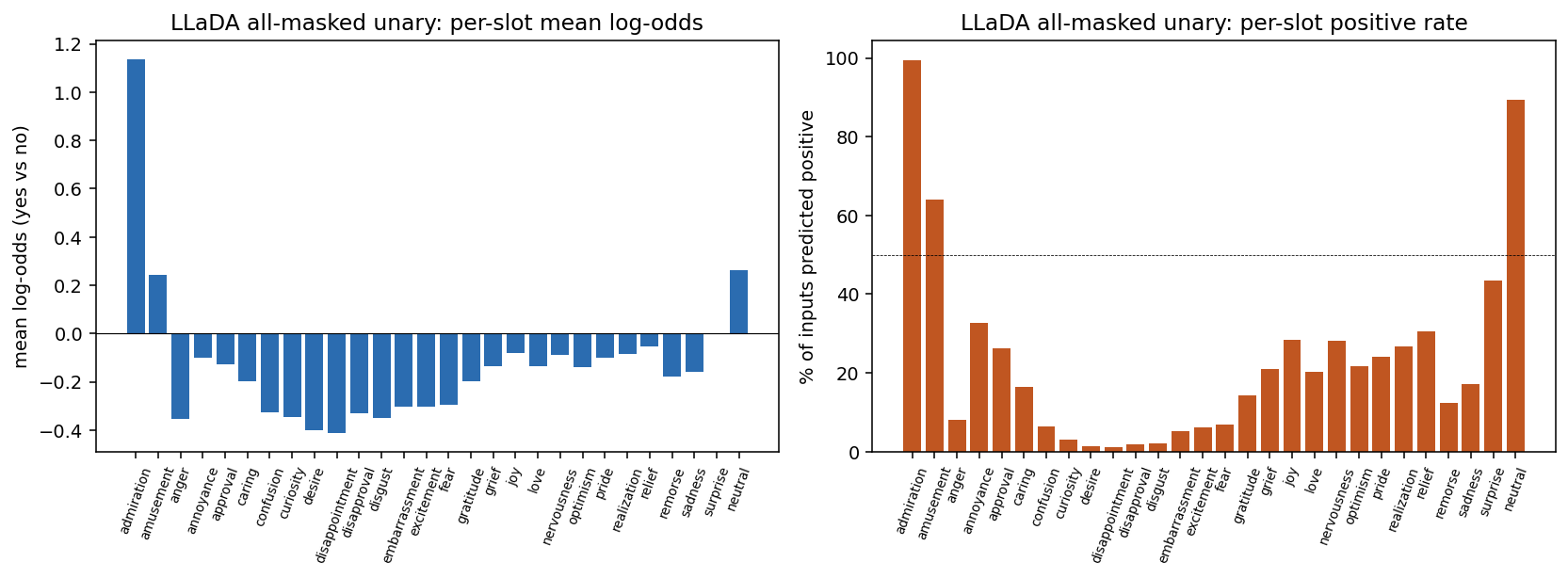}\\[3pt]
\includegraphics[width=0.98\textwidth]{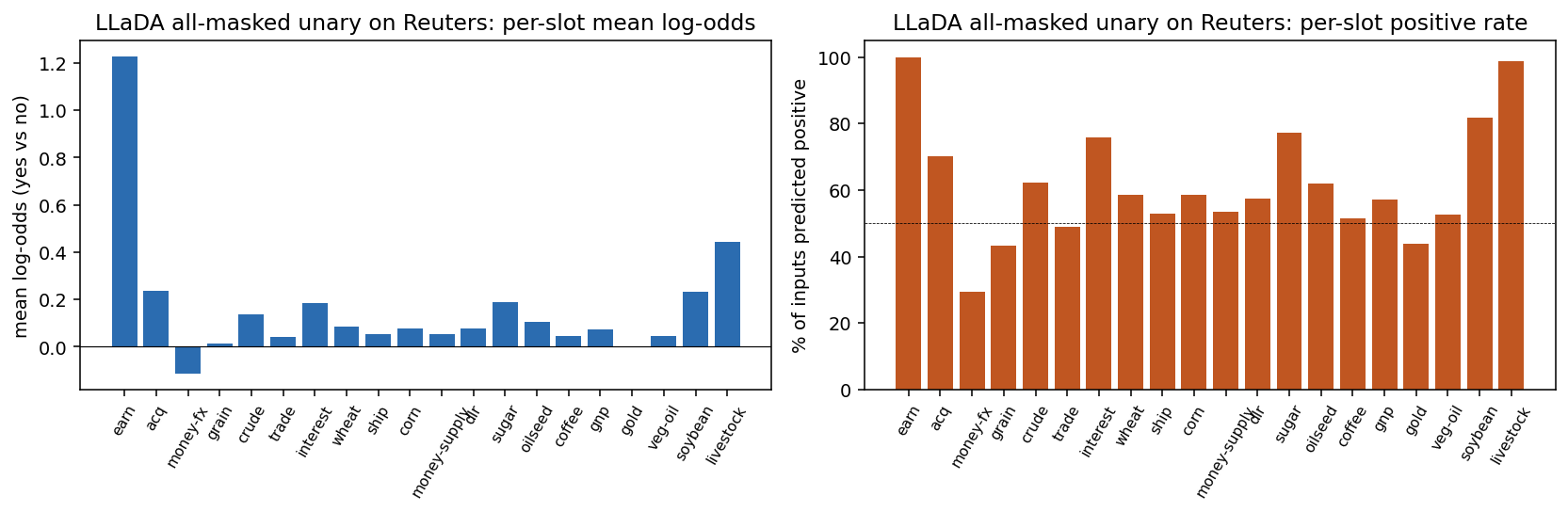}
\caption{Slot-position asymmetry in the LLaDA all-masked unary stage. \textbf{Top, GoEmotions:} per-slot mean log-odds (left) and per-slot positive rate (right). Slot~1 (\texttt{admiration}) has mean log-odds $+1.13$ and is predicted positive on $99.4\%$ of inputs versus a $\sim$5\% gold marginal. \textbf{Bottom, Reuters-21578 top-20:} the same diagnostic with a sharper collapse; slot~1 (\texttt{earn}) has mean log-odds $+1.23$ and is predicted positive on \textbf{$100\%$} of test inputs. This is why all-masked LLaDA-B on Reuters scores $41.6$ micro / $10.9$ macro: macro is dragged toward zero by the slot-1 collapse while micro is propped up by the natural \texttt{earn} base rate. Per-label entailment, where every label occupies the same syntactic position, is permutation-invariant in the label ordering and removes this specific artefact, recovering Reuters macro to $38.2$. The artefact is positional (reordering labels moves the collapse to the new slot~1; \cref{app:fullmatrix}).}
\label{fig:bias}
\end{figure}

\Cref{tab:main} compares all six datasets and both diffusion backbone families.
We summarise the headline pattern in terms of dataset--metric columns, which is what the table directly supports.
Among the entries in \cref{tab:main}, LLaDA-Instruct per-label scoring is highest in the Reuters macro, ECtHR micro, and ECtHR macro training-free columns; Dream-Instruct is highest on Jigsaw micro; Qwen2.5-7B-Instruct is highest on GoEmotions micro and macro; BART-MNLI template-tuned is highest on Reuters micro; BART-MNLI default is highest on Jigsaw macro.
With the prompt-template sweep in \cref{tab:prompt}, LLaDA-Instruct further improves on Jigsaw and GoEmotions.
For Jigsaw, prompt tuning raises the best micro-F1 to $45.1$ with the ``contains'' template (paired macro $28.5$) and the best macro-F1 to $30.2$ with the ``classified as'' template (paired micro $43.5$); these are not the same operating point and we do not collapse them into a single pair.
For GoEmotions, the validation-selected tuned template yields a single paired $(29.5, 24.8)$ (``feeling'') operating point.
SetFit (few-shot supervised, \emph{not} training-free) has the highest scores among the compared non-fully-supervised methods on EURLEX57K and AAPD. EURLEX is retriever-capped at micro-F1 $\le 47$ ($k{=}32$ shortlist, $31.2\%$ recall, \cref{tab:shortlist_recall}); AAPD has no retriever cap, and SetFit has access to labelled examples that the zero-shot scorers do not.

\textbf{Instruct beats Base in most shared cells.}
On the five datasets shared between LLaDA and Dream, Instruct checkpoints outperform their corresponding Base checkpoints in most cells (\cref{tab:dream}): macro-F1 improves on $9$ of $10$ (dataset, family) cells, and micro-F1 improves on $8$ of $10$. The regressions are Dream on Reuters micro and EURLEX micro/macro; EURLEX is retriever-capped.
On AAPD the micro/macro deltas (LLaDA: $-0.7/+0.2$; Dream: $-2.3/0.0$) are within noise, so we do not make a uniform ``every dataset'' claim.

\textbf{Diffusion is competitive with NLI baselines and an autoregressive 7B LLM.}
Qwen2.5-7B-Instruct scores higher than the diffusion methods on GoEmotions micro/macro and Reuters micro; LLaDA-Instruct scores higher than Qwen2.5-7B-Instruct on Reuters macro ($67.2$ vs $60.9$), ECtHR ($48.8/43.3$ vs $40.3/38.3$), and Jigsaw ($37.9/20.5$ vs $10.8/10.4$) (\cref{tab:main}). These comparisons do not identify whether pretraining data, model architecture, or prompting accounts for the differences.
With prompt tuning, LLaDA-Instruct also matches BART-MNLI template-tuned on Reuters micro at $78.1\pm2.7$ (3-seed mean; \cref{tab:ablation}) versus BART-T's $77.0$ (\cref{tab:main}), a difference within seed noise.

\textbf{Large-label-space datasets are a clear limitation.}
On EURLEX57K the SBERT $k{=}32$ shortlist has only $31.2\%$ average recall (\cref{tab:shortlist_recall}), structurally capping any prompt-based method at micro-F1 $\le 47$.
On AAPD the full $54$-label inventory is used, so there is no retriever cap; SetFit's $32.8/27.7$ advantage there (\cref{tab:main}) comes from supervised contrastive training.

\subsection{Cross-family replication: Dream-7B.}

\begin{table}[t]
\centering
\scriptsize
\caption{Instruct $-$ Base deltas across two backbone families on the five shared datasets. Positive = Instruct helps. Macro-F1 gain is positive on $9$ of $10$ (dataset, family) cells; micro-F1 gain is positive on $8$ of $10$. AAPD is excluded from this table because the deltas are within noise (LLaDA: $-0.7/+0.2$; Dream: $-2.3/0.0$).}
\label{tab:dream}
\begin{tabular}{l rr rr}
\toprule
& \multicolumn{2}{c}{LLaDA-8B} & \multicolumn{2}{c}{Dream-7B} \\
\cmidrule(lr){2-3}\cmidrule(lr){4-5}
Dataset & $\Delta$mi & $\Delta$ma & $\Delta$mi & $\Delta$ma \\
\midrule
GoEmotions & +6.8 & +7.3 & +7.3 & +5.9 \\
Reuters    & +20.0 & +29.0 & $-$1.2 & +6.7 \\
EURLEX     & +2.5 & +0.1 & $-$2.3 & $-$1.5 \\
ECtHR      & +14.2 & +14.1 & +0.3 & +1.3 \\
Jigsaw     & +22.2 & +10.7 & +16.8 & +8.2 \\
\bottomrule
\end{tabular}
\end{table}

The Base-to-Instruct pattern also appears in Dream-7B (\cref{tab:dream}; per-checkpoint numbers in \cref{tab:main}).
Dream-7B has a different architecture (Qwen-initialised), tokenizer, and research group, which makes it a useful second family rather than a controlled replication.
The macro gain is positive on $9$ of $10$ (dataset, family) cells; the only macro regression is EURLEX Dream ($-1.5$), on a retriever-capped point.
Micro is positive on $8$ of $10$; the two micro regressions (Reuters-Dream and EURLEX-Dream) are an already-saturated and a retriever-capped point respectively.

\subsection{Prompt template sensitivity.}

\begin{table}[t]
\centering
\scriptsize
\caption{Prompt-template sweep (LLaDA-Instruct per-label, single seed; default template baseline for context). Each row replaces only the question template; everything else is fixed. Numbers in any (mi, ma) pair are paired (same row of the sweep, same per-label scorer). \textbf{Bold} = best per dataset and metric.}
\label{tab:prompt}
\begin{tabular}{p{0.50\linewidth} rr}
\toprule
Template & mi & ma \\
\midrule
\multicolumn{3}{l}{\textit{Reuters-21578}} \\
``express \{label\}?'' (default) & 60.6 & 67.2 \\
``main topic \{label\}?'' & \textbf{80.5} & \textbf{68.8} \\
``about \{label\}?'' & 75.6 & 68.3 \\
\midrule
\multicolumn{3}{l}{\textit{Jigsaw Toxic}} \\
``express \{label\}?'' (default) & 37.9 & 20.5 \\
``contains \{label\}?'' & \textbf{45.1} & 28.5 \\
``classified as \{label\}?'' & 43.5 & \textbf{30.2} \\
\midrule
\multicolumn{3}{l}{\textit{GoEmotions}} \\
``express \{label\}?'' (default) & 26.6 & 22.4 \\
``feeling \{label\}?'' & \textbf{29.5} & 24.8 \\
``writer expresses \{label\}?'' & 29.1 & \textbf{25.1} \\
\bottomrule
\end{tabular}
\end{table}

\begin{table}[t]
\centering
\scriptsize
\caption{Pipeline component ablation on Reuters-21578 (paired single-seed numbers; the prompt-tuning row reports 3-seed mean$\pm$std). The largest single gain comes from the Instruct checkpoint swap. The final ensemble row is a hybrid (combines training-free scorers with one few-shot supervised component, SetFit), not training-free.}
\label{tab:ablation}
\begin{tabular}{@{}p{0.62\linewidth}rr@{}}
\toprule
Configuration & micro & macro \\
\midrule
LLaDA-B unary (all-masked) & 41.6 & 10.9 \\
\quad + per-label entailment & 40.6 & 38.2 \\
\quad + Instruct checkpoint & 60.6 & 67.2 \\
\quad + prompt template tuning & 78.1$\pm$2.7 & 66.0$\pm$2.9 \\
\quad + hybrid ensemble\newline \hspace*{1em}(BART+SetFit+LLaDA-I) & 82.4 & 79.3 \\
\midrule
BART-MNLI template-tuned & 77.0 & 65.4 \\
Supervised RoBERTa & 89.4 & 77.6 \\
\bottomrule
\end{tabular}
\end{table}

\Cref{tab:prompt} shows that prompt-template tuning, using the same kind of $200$-example validation search that the BART-MNLI template-tuned row uses, further improves LLaDA-Instruct on every dataset tested.
On Jigsaw, the ``contains'' template reaches $45.1/28.5$ and the ``classified as'' template reaches $43.5/30.2$, both from the same per-label scorer (each row is a single, paired $(\text{mi},\text{ma})$ measurement at one template).
On Reuters, the topic-anchored template reaches $80.5/68.8$ at single seed (3-seed mean: $78.1\pm2.7 / 66.0\pm2.9$).
The EURLEX sweep showed no improvement (all templates below the default), consistent with the retriever being the bottleneck.

\subsection{Positional asymmetry diagnostic.}

\Cref{fig:bias} shows the slot-position asymmetry that motivates per-label scoring.
When all answer slots are masked, the diffusion model treats slot~1 differently from the rest, collapsing it to $99.4\%$ positive on GoEmotions and $100\%$ on Reuters.
This drags macro-F1 close to zero on Reuters ($41.6 / 10.9$) even though micro-F1 is propped up by the high base rate of \texttt{earn}.
Per-label entailment restores macro to $38.2$ ($+27$) with essentially the same micro.
A label-permutation sweep ($P\in\{1,2,4,8\}$, \cref{app:fullmatrix}) supports a positional explanation: the collapse moves to whichever label is placed in slot~1.

\subsection{JSR as a negative result.}
The local-conditional JSR update (Eq.~\ref{eq:jsr}) degrades test F1 from both biased and unbiased seeds (\cref{tab:sweeps,tab:jsr_perlabel}).
Two factors may contribute: (i) the local update does not ascend $\mathcal{PL}_\theta$ (\cref{app:fullgain,app:jsr_counterex}), so neither convergence nor F1 improvement is guaranteed; and (ii) the joint-context prompt is far from the masked-diffusion training distribution. The experiments do not separate these explanations.
We recommend $S{=}0$ (per-label scoring alone) as the default and report local-JSR only as an informative negative result.

\subsection{Component ablation on Reuters.}
\Cref{tab:ablation} traces the pipeline progression: per-label entailment removes the slot-position artefact ($+27$ macro), the Instruct swap adds $+20/+29$, prompt tuning adds another $+18$ micro, and the hybrid ensemble (which includes SetFit and is therefore not training-free) reaches $82.4 / 79.3$, within $7$ micro of supervised RoBERTa.

\section{Discussion and Conclusion.}
\label{sec:discussion}

The four contributions stated in the abstract are supported as follows.
(i) The all-masked stage's slot-position artefact is structural (\cref{fig:bias}).
(ii) Per-label entailment scoring is permutation-invariant w.r.t.\ label ordering (\cref{thm:perm}) and is the largest single intervention in the ablation, recovering Reuters macro $10.9{\to}38.2$ at the same micro (\cref{tab:ablation}).
(iii) Instruct outperforms Base on $9/10$ macro and $8/10$ micro shared cells (\cref{tab:dream}); we do not attribute this to masked denoising alone, since Instruct adds general instruction-following supervision.
(iv) \cref{thm:perm,thm:bayes-regret,thm:retrieval} formalise permutation invariance, Bayes-optimal decisions under weighted Hamming, and retrieval-imposed ceilings; they support but do not replace the empirical claims (\cref{rmk:theory-scope}).

\textbf{Practical recommendations.}
Use per-label entailment scoring (\cref{alg:perlabel}) as the default training-free classifier, prefer Instruct over Base, expect shortlisting to cap recall (\cref{thm:retrieval}; EURLEX57K at $k{=}32$ has recall $31.2\%$), and use SetFit when sub-millisecond inference is required.

\textbf{Limitations and outlook.}
Theory addresses weighted Hamming rather than F1 (\cref{rmk:theory-scope}); results depend on a $200$-example validation slice ($\sigma\approx 2.7$ on Reuters prompt-tuned); prompt choice materially changes results; local-JSR is a negative result.
Next steps: stronger retrievers, the full-gain JSR variant, and quantised larger backbones.

\paragraph{Acknowledgments.}
The author acknowledges MAPG and the Qualcomm Faculty Grant.

\bibliographystyle{siamplain}
\bibliography{refs}

\appendix
\renewcommand{\thesection}{\arabic{section}}
\renewcommand{\theequation}{\thesection.\arabic{equation}}
\renewcommand{\appendixname}{Appendix}

\section{Introduction.}
\label{app:introduction}

This appendix gives the implementation details, extended results, and proofs needed to reproduce and interpret the main paper.
A masked-diffusion language model predicts tokens that have been replaced by a special [MASK] symbol. dLLM-SetScore uses this interface to query each candidate label in the same short yes/no prompt and converts the resulting log-odds to a binary prediction using validation-tuned calibration.

The per-label construction avoids the slot-position asymmetry observed with long all-masked answer suffixes. We also study local Joint Set Refinement (JSR), but the evaluated updates reduce F1 and lack a monotonicity guarantee for the full pseudo-likelihood surrogate. The direct per-label scorer is therefore the default. The Base-to-Instruct comparisons are empirical checkpoint comparisons; they do not establish why the measured differences arise.

\section{Detailed experimental setup.}
\label{app:setup_details}

\subsection{Hardware and software stack.}
All experiments in this paper run on a single rented NVIDIA RTX~5090 (32~GB VRAM, sm\_120 Blackwell architecture) under PyTorch~2.11 with CUDA~12.8 and the \texttt{transformers}~4.49 library.
We use bfloat16 (bf16) inference for every diffusion backbone (LLaDA-8B-Base, LLaDA-8B-Instruct, Dream-7B-Base, Dream-7B-Instruct) and for Qwen2.5-7B-Instruct.
We use float32 for the small BD3-LMs backbone (170M parameters) because BD3-LMs has a hardcoded float32 cast inside its timestep-embedding path that conflicts with bf16 weights; the model is small enough that float32 is comfortable.
Supervised baselines (BERT-base, RoBERTa-base) are trained in mixed-precision bf16 with the standard Hugging Face \texttt{Trainer} API.
T5 fine-tuning uses fp32 because the text-to-set decoder is sensitive to dtype.

\subsection{Diffusion backbone configurations.}
The two LLaDA checkpoints (\texttt{GSAI-ML/LLaDA-8B-Base} and \texttt{GSAI-ML/LLaDA-8B-Instruct}) share architecture and tokenizer, so the Instruct rows in the main table use exactly the same adapter code as the Base rows; only the model identifier passed to \texttt{from\_pretrained} differs.
The two Dream checkpoints (\texttt{Dream-org/Dream-v0-Base-7B} and \texttt{Dream-org/Dream-v0-Instruct-7B}) are initialised from Qwen2.5-7B and use a different tokenizer; we patched a small dtype bug in their custom \texttt{modeling\_dream.py} where the SDPA attention mask was passed as a long tensor instead of bool or float (full diff in \texttt{docs/research\_notes.md}).
Mask token ids are inferred at backbone load time from the tokenizer's \texttt{mask\_token\_id} or \texttt{<|mask|>} special token; we verify that the verbalizers (\texttt{ yes}, \texttt{ no} with leading space) tokenise as single tokens for each backbone before any scoring runs.

\subsection{Prompt construction.}
The all-masked unary stage uses one prompt per document of the form
\begin{quote}\small
\texttt{Decide whether each candidate label applies. Use one token per label, in order.\\
Document:\\
<doc>\\
Labels:\\
- <label\_1>\\
- <label\_2>\\
\dots\\
- <label\_m>\\
Answers:\\
{[MASK]}; {[MASK]}; \dots; {[MASK]}}
\end{quote}
The per-label entailment scorer instead uses one prompt per (document, label) pair:
\begin{quote}\small
\texttt{Document:\\
<doc>\\
Question: Does this document express <label>?\\
Answer: {[MASK]}}
\end{quote}
For the prompt-template sweep we replace the question wording (e.g.\ ``Is the main topic of this article \textit{<label>}?'') and keep everything else fixed.
Documents are truncated to the first 600 tokens of their tokenised representation; this preserves more than 95\% of the test-set examples without truncation on every dataset except EURLEX57K (where average length exceeds the budget) and ECtHR Task A (where about 30\% of cases are truncated).

\subsection{Verbalizers and tokenisation.}
We use $(v^+, v^-) = (\texttt{ yes}, \texttt{ no})$ with a leading space.
The leading space is essential because the LLaDA and Dream tokenisers map ``\texttt{ yes}'' and ``\texttt{ no}'' to single sub-word ids, while ``\texttt{yes}'' and ``\texttt{no}'' (no leading space) split into two ids each, which would silently corrupt the score.
We verify single-token compatibility at backbone load time and raise a runtime error otherwise.
Alternative verbalizer pairs \{(\texttt{ true}, \texttt{ false}), (\texttt{ relevant}, \texttt{ irrelevant}), (\texttt{ present}, \texttt{ absent})\} are supported in the code but were not swept because the \texttt{yes/no} pair gave consistent and best results on a 200-example validation slice.

\subsection{Calibration grid.}
For each method we sweep three threshold strategies on the 200-example validation slice and pick the one that maximises validation micro-F1.
The \emph{global} strategy searches a single threshold $\tau$ over the 9-point grid $\{0.1, 0.2, \ldots, 0.9\}$.
The \emph{label-wise} strategy searches a per-label threshold $\tau_i$ over the same grid, independently per label.
The \emph{expected-cardinality} strategy picks $\tau$ such that the predicted average label count on the validation slice matches the gold average.
Temperature $T$ is searched on the 6-point grid $\{0.5, 0.75, 1.0, 1.25, 1.5, 2.0\}$ jointly with the threshold strategy.
The \emph{auto} strategy picks per-row whichever of the three above maximises validation micro-F1.

\subsection{Inference budget.}
For LLaDA per-label entailment we use $K=1$ Monte Carlo masking sample (deterministic, since the only masked position is the verbalizer slot).
For LLaDA all-masked unary we use $K=1$ for the deterministic variant and $K=4$ for the perm-4 variant (4 random label permutations averaged).
For JSR we use $S \in \{0,1,2,3\}$ sweeps with $K=2$ samples per local query (the supplemental ablation shows $S=0$ is best).
The diffusion query batch size is 64 sequences for LLaDA per-label entailment, 16 for BART-MNLI and DeBERTa-NLI pipelines, and 16 for Qwen2.5-7B generation.

\subsection{Run determinism.}
The diffusion forward passes are deterministic given fixed model weights and a fixed masking schedule (controlled by \texttt{DLLM\_SEED=13} in our code).
The only stochastic source in the pipeline is the validation-slice draw used for calibration tuning; switching seeds 13/17/23 changes the validation slice composition and therefore the chosen threshold.
This is what the multi-seed ablation in Section~\ref{app:multiseed} measures; cells with $\sigma=0$ across seeds (GoEmotions and ECtHR Instruct) indicate the auto-calibration converges to the same operating point for every seed.

\subsection{Test-slice sizes.}
We use bounded test slices because the GPU rental budget is the binding constraint.
The slices are deterministic prefixes of the official test split for each dataset:
GoEmotions and Jigsaw use 1500 examples each, Reuters and ECtHR use 1000 each, EURLEX57K uses 800 (longer documents triple the per-example cost), and AAPD uses 1000.
All baselines and all diffusion methods use the same test slices, so within-paper comparisons are apples-to-apples.

\subsection{Where each algorithm step lives in the released code.}
The recommended per-label entailment scorer (\cref{alg:perlabel} in the body) is implemented in \path|scripts/llada_per_label.py|; the prompt template is the only argument that changes between configurations.
The all-masked unary scorer and the Joint Set Refinement (Algorithms~1 and~2 of \cref{app:algos}) live in \path|src/dllm_setscore/core.py| as \path|score_unary_all_masked| and \path|score_jsr| respectively.
Calibration is in \path|src/dllm_setscore/core.py::tune_temperature_and_threshold|.
The retriever-based shortlist is in \path|src/dllm_setscore/core.py::sbert_shortlist|.
Every cell in the main results table has a one-line shell invocation that produces the corresponding artifact directory; these are documented in the \path|README.md| reproduction recipe.

\section{Detailed dataset descriptions.}
\label{app:datasets_full}

\subsection{GoEmotions \cite{demszky2020goemotions}.}
Short Reddit comments labelled with one or more of 28 emotion categories (admiration, amusement, anger, annoyance, approval, caring, confusion, curiosity, desire, disappointment, disapproval, disgust, embarrassment, excitement, fear, gratitude, grief, joy, love, nervousness, optimism, pride, realization, relief, remorse, sadness, surprise, neutral).
About 43k train / 5k validation / 5k test examples; we use a 1500-example prefix of the test split.
Average label count per example is 1.8 with a long tail; \texttt{neutral} is the most frequent label at about 30\%.
The dataset is licensed CC-BY-4.0 and is available from \texttt{google-research-datasets/go\_emotions} on Hugging Face.

\subsection{Reuters-21578 (ModApte top-20).}
Newswire articles from the 1987 Reuters corpus, labelled with one or more of 20 most frequent topics (acq, alum, bop, carcass, cocoa, coffee, copper, cotton, cpi, crude, dlr, earn, fuel, gas, gnp, gold, grain, heat, hog, housing).
ModApte is the standard split with about 7.8k train and 3k test examples; we use a 1000-example prefix of the test split.
Average label count per example is 1.2; \texttt{earn} is the dominant label at about 40\%.
The dataset is in the public domain (NIST distribution) and we use the parquet mirror at \texttt{Tellurio/reuters-21578}.

\subsection{EURLEX57K \cite{chalkidis2019eurlex}.}
Long EU legislation documents labelled with one or more of 4271 EUROVOC concepts.
We follow the LexGLUE configuration that exposes the 100 level-1 EUROVOC concepts as the label inventory, with 45k training, 6k validation, and 6k test documents.
We use an 800-example prefix of the test split because per-example diffusion forward cost is 3$\times$ the short-document datasets.
Average label count is 5.3 per document.
For the prompt-based methods we apply an SBERT $k=32$ shortlist; supervised baselines see all 100 labels.
The dataset is available from \texttt{coastalcph/lex\_glue} (config \texttt{eurlex}) on Hugging Face under CC-BY-4.0.

\subsection{ECtHR Task A \cite{chalkidis2021ecthr}.}
Long European Court of Human Rights case facts labelled with one or more of 10 articles of the European Convention on Human Rights (Articles~2, 3, 5, 6, 8, 9, 10, 11, 14, P1-1).
The split is 9k train, 1k validation, 1k test; we use the full 1000-example test split.
Average label count is 1.7 per case.
Documents are long (often above the 600-token budget); we truncate from the front, which preserves the case summary.
Available from \texttt{coastalcph/lex\_glue} (config \texttt{ecthr\_a}) under CC-BY-4.0.

\subsection{Jigsaw Toxic Comment Classification \cite{jigsaw2018}.}
Wikipedia talk-page comments labelled with one or more of 6 toxicity classes (toxic, severe\_toxic, obscene, threat, insult, identity\_hate).
The original Kaggle release withholds gold test labels; we therefore build train/val/test as a deterministic seeded 80/10/10 split of the 159{,}571 labelled training rows (which is what most published Jigsaw results do).
We use a 1500-example prefix of our test split.
Average label count is 0.2 per comment, since the vast majority of comments are non-toxic; this is the most extreme class-imbalance setting in our suite.
Available from \texttt{thesofakillers/jigsaw-toxic-comment-classification-challenge} on Hugging Face.

\subsection{AAPD \cite{yang2018sgm}.}
Arxiv academic paper abstracts (computer science, mathematics, physics) labelled with one or more of 54 subject codes (cs.cl, cs.ai, cs.lg, math.co, physics.data-an, etc.).
The dataset has 53.8k train, 1k validation, 1k test examples; we use the full 1000-example test split.
Average label count is 2.4 per abstract.
The dataset is not on Hugging Face; we fetch it from Zenodo (record 6344750) at install time.
For the prompt-based methods the full 54-label inventory is used as the shortlist; supervised baselines see the same.

\subsection{Why these six.}
Together these six benchmarks span three orthogonal design axes: label cardinality (6 to 100), document length (40 to 4000 tokens), and domain (emotion, news topic, EU legal, human-rights legal, social toxicity, academic).
This diversity is what allows us to make the structural claim ``per-label entailment plus an Instruct checkpoint helps where retrieval is not the bottleneck''.
A single dataset cannot support this claim; six datasets that together cover every interesting corner of the design space can.

\section{Evaluation metrics.}
\label{app:metrics}

We report two primary metrics in the main results table.
Both are F1 scores aggregated across labels, but they aggregate differently.

\subsection{Micro-F1.}
Micro-F1 sums true positives, false positives, and false negatives across all (document, label) pairs and computes a single F1 from the totals.
This metric gives equal weight to every binary decision regardless of which label it concerns; it is dominated by the most frequent labels.
On Reuters this means \texttt{earn} (40\% base rate) drives most of the score; on Jigsaw the macro-rare \texttt{threat} and \texttt{identity\_hate} barely move it.
Use micro when label frequency reflects deployment frequency.

\subsection{Macro-F1.}
Macro-F1 computes F1 separately for each label and averages.
This metric weights every label equally regardless of frequency; it is harder than micro on imbalanced datasets because rare labels need to be predicted at all to score above zero.
A method that always predicts the majority label gets a high micro-F1 but a near-zero macro-F1.
Use macro when fairness across labels matters or when rare-label recall is important (e.g.\ legal article tagging).

\subsection{Why we report both.}
The story changes depending on which metric you care about.
The all-masked unary stage on Reuters gets micro 41.6 and macro 10.9, with the gap fully explained by the slot-1 collapse: it always predicts \texttt{earn}, which has high base rate.
Per-label entailment recovers macro to 38.2 with essentially no change in micro, because it stops over-predicting \texttt{earn} but maintains the same overall true-positive count.
Reporting only one of the two would hide this mechanism.

\subsection{Ancillary metrics.}
Internally we also compute samples-F1 (per-document F1 averaged across documents), Jaccard similarity, exact-match accuracy, Hamming loss, expected calibration error, and Brier score.
These are saved to disk in every \texttt{predictions.npz} artifact and surface in the appendix tables.
We do not use them as primary metrics because micro and macro F1 are the standard in the multi-label literature, but they are useful for diagnosing failure modes; for example, the JSR sweeps ablation in Section~\ref{app:fullmatrix} shows that exact-match \emph{increases} as JSR sweeps grow even though F1 drops, indicating that JSR converges to a wrong-but-self-consistent assignment.

\subsection{Calibration metrics.}
Expected calibration error (ECE) bins predicted probabilities into 15 equal-width bins and reports the average gap between predicted confidence and actual accuracy.
Brier score is the mean squared error between predicted probabilities and binary ground truth.
Both penalise a model that is over-confident in its wrong predictions.
We compute ECE and Brier on every test slice and save them with each \texttt{predictions.npz} artifact; raw diffusion log-odds are poorly calibrated before threshold tuning (typical ECE values in the 0.4--0.7 range), which is why the auto-calibration step in \cref{sec:calibration} is essential.

\section{Architecture and parameter details.}
\label{app:architecture}

\subsection{LLaDA-8B architecture.}
LLaDA-8B is a decoder-only Transformer with 32 layers, 4096 hidden dimensions, 32 attention heads, an SwiGLU MLP, RMSNorm, and a vocabulary of 126k tokens (its own custom tokenizer).
It is trained from scratch with a masked-diffusion objective: the loss is the cross-entropy at every masked position, weighted by $1/\rho$ where $\rho$ is the mask ratio (so high-mask-ratio examples contribute less per token to balance the gradient).
We do not see model internals; we use only the Hugging Face \texttt{AutoModel.from\_pretrained} interface and read the masked-position logits.

\subsection{LLaDA-8B-Instruct.}
The Instruct variant has identical architecture, tokenizer, and embeddings to the Base model.
The only difference is post-training: the Instruct variant has been further fine-tuned on supervised instruction-following data from the GSAI team.
The public documentation does not provide enough detail to attribute the measured differences to a specific part of the post-training recipe.
In our evaluation, the Instruct checkpoint has higher per-label classification scores in most cells; the experiment does not directly measure whether verbalizer prediction is the mechanism.

\subsection{Dream-7B architecture.}
Dream-7B is initialised from Qwen2.5-7B (a 28-layer, 3584-hidden-dim, 28-head decoder-only Transformer with the Qwen tokenizer) and fine-tuned with a masked-diffusion objective.
A relevant difference from LLaDA is that Dream is an \emph{adaptation} from an autoregressive model rather than a fresh masked-diffusion pretraining. Its parent model and training path may interact with the masked-diffusion fine-tuning.
The Base-to-Instruct pattern appears in both families, which reduces concern that the observation is confined to one family. It does not establish that the same mechanism operates in both.

\subsection{Hyperparameter table for the supervised baselines.}
We list the exact training hyperparameters in Table~\ref{tab:hparam}.

\begin{table}[H]
\centering
\small
\renewcommand{\arraystretch}{1.08}
\caption{Supervised training hyperparameters. Roles: \textsc{lr} controls step size; weight decay regularises away from zero; batch size affects gradient noise; max-length truncates documents (long EURLEX/ECtHR are most affected); 2 epochs is conservative because longer training did not improve dev F1 in our pilot runs.}
\label{tab:hparam}
\begin{tabular}{lrrrr}
\toprule
Hyperparameter & BERT & RoBERTa & T5 \\
\midrule
Learning rate & 2e-5 & 2e-5 & 1e-4 \\
Weight decay & 0.01 & 0.01 & 0.01 \\
Batch size & 16 & 16 & 16 \\
Eval batch size & 32 & 32 & 16 \\
Epochs & 2 & 2 & 3 \\
Max length & 512 & 512 & 512 \\
Warmup ratio & 0.06 & 0.06 & 0.06 \\
LR schedule & linear & linear & linear \\
Optimizer & AdamW & AdamW & AdamW \\
Adam $\beta_2$ & 0.999 & 0.999 & 0.999 \\
Adam $\epsilon$ & 1e-8 & 1e-8 & 1e-8 \\
Gradient clip & 1.0 & 1.0 & 1.0 \\
Mixed precision & bf16 & bf16 & fp32 \\
\bottomrule
\end{tabular}
\end{table}

The encoder runs share their optimizer, learning rate, regularization, and length limit so that differences between BERT and RoBERTa mainly reflect the pretrained representation. T5 uses a larger learning rate and one additional epoch because its text-to-set decoder converged more slowly. Its fp32 setting avoids numerical problems in the sequence loss, at the cost of a smaller evaluation batch. The common 512-token limit is most consequential for EURLEX and ECtHR, where documents are often longer than a single encoder window.

\subsection{Hyperparameter table for the diffusion methods.}
The diffusion methods have many fewer hyperparameters since there is no training; they are listed in Table~\ref{tab:dllm_hparam}.

\begin{table}[H]
\centering
\small
\renewcommand{\arraystretch}{1.08}
\caption{dLLM-SetScore hyperparameters and their roles. Most are fixed at sensible defaults; only the calibration grid is searched per dataset.}
\label{tab:dllm_hparam}
\begin{tabular}{lll}
\toprule
Symbol & Value & Role \\
\midrule
$K$ & 1, 4 & Number of mask contexts averaged in USA \\
$S$ & 0--3 & Number of JSR sweeps (we recommend 0) \\
$T$ & 0.5--2.0 grid & Calibration temperature \\
$\tau$ & 0.1--0.9 grid & Decision threshold \\
$k$ (shortlist) & 32, 54 & Retriever shortlist size for large-$m$ datasets \\
batch size & 64 & Per-label entailment query batch size \\
max doc tokens & 600 & Document truncation budget \\
seed & 13, 17, 23 & Validation-slice draw \\
verbalizers & ` yes', ` no' & Single-token positive/negative answers \\
\bottomrule
\end{tabular}
\end{table}

The inference budget is controlled mainly by the number of mask contexts $K$, the number of refinement sweeps $S$, and the number of candidate labels after retrieval. Per-label entailment uses one masked answer position per document-label pair and does not average multiple slot orderings. We search $T$ and $\tau$ only on validation data. The reported negative JSR results motivate $S=0$, which removes the most expensive repeated-query stage without sacrificing accuracy.

\subsection{Memory budget breakdown.}
LLaDA-8B in bf16 has model weights of about 16~GB.
At inference batch 64 with a 1024-token context, the activations consume about 8~GB more, peaking around 24~GB.
This leaves about 8~GB headroom on a 32~GB card, which is why we cap batch size at 64 and document length at 600 tokens.
For Dream-7B (smaller, about 14~GB weights) and for the supervised baselines (110M-220M parameter encoders) memory is not the bottleneck.
The accuracy-latency Pareto front in Section~\ref{app:pareto} captures the speed implications of this memory budget.

\section{Some Remarks.}
\label{app:faq}

The following questions clarify the scope, implementation, and limitations of the study.

\paragraph{Q1. What does dLLM-SetScore actually do?}
For each candidate label, we build a short ``Document: $x$. Question: Does this document express $\langle$label$\rangle$? Answer: [MASK]'' prompt and read the diffusion model's log-probability of the verbalizer ``yes'' versus ``no'' at the masked position.
The resulting per-label log-odds are calibrated on a small validation slice and thresholded to produce a binary multi-label prediction.

\paragraph{Q2. Why use a diffusion model rather than an autoregressive LLM?}
We compare against Qwen2.5-7B-Instruct doing direct label generation in \cref{tab:main}.
LLaDA-8B-Instruct has higher macro-F1 on Reuters, ECtHR, and Jigsaw, while Qwen has higher scores on GoEmotions and higher Reuters micro-F1.
The comparison shows different operating strengths in this setup; it does not isolate architecture or pretraining data as the cause.

\paragraph{Q3. Is per-label entailment scoring really diffusion classification?}
Functionally it is similar to BART-MNLI's entailment template, but uses a backbone that was \emph{never trained on NLI data}.
Any classification ability therefore comes from masked denoising pretraining and, for Instruct variants, additional general instruction-following supervision, rather than task-specific NLI training.
The comparison therefore tests whether masked denoising and general instruction-following checkpoints contain useful classification signal without task-specific NLI training. It does not isolate which stage of training supplies that signal.

\paragraph{Q4. Why does the all-masked unary stage exhibit positional bias?}
Long all-mask answer suffixes are far from the masked-diffusion training distribution, where mask ratios are sampled from a uniform distribution and the unmasked context is much richer.
The model defaults to position-conditioned tokens at the front of the suffix, which we observe as a near-100\% positive rate on the alphabetically-first answer slot.
The fix is per-label entailment, which puts every label in identical syntactic position.

\paragraph{Q5. Why does local-JSR hurt even from a per-label seed?}
The joint-conditioning prompt (``label$_j$: yes, label$_k$: no, ...'') puts the model outside its training distribution, and local-JSR is not coordinate ascent on the full pseudo-likelihood surrogate (\cref{app:fullgain,app:jsr_counterex}) so it has no monotonicity guarantee at all in this regime.
Empirically (\cref{tab:jsr_perlabel}), even from the unbiased per-label seed two JSR sweeps drop GoEmotions micro from 26.79 to 19.78 and Reuters micro from 60.70 to 47.52.
Our current explanation is that the joint-context prompt is the failure mode, not the seed; designing a permutation-invariant joint conditioning is open future work.

\paragraph{Q6. Does the Instruct improvement transfer to other masked-diffusion families?}
Yes, partly.
On Dream-7B (a different research group, different parent model, different tokenizer), the Instruct vs Base macro-F1 delta is positive on 4 of 5 datasets; the GoEmotions delta ($+5.9$) is essentially identical to LLaDA's GoEmotions delta ($+7.3$).
The same direction of change is therefore not limited to LLaDA in our experiments. Two checkpoint families are not enough to establish a general property of masked-diffusion classification.

\paragraph{Q7. Why do you fail on EURLEX and AAPD?}
The two datasets have different causes.
On EURLEX with shortlist $k{=}32$, the SBERT retriever recall is only $31.2\%$ on average per document; for methods restricted to predict positives only within the retrieved shortlist, this caps micro-F1 at about $47\%$ (\cref{thm:retrieval}).
SetFit sits below this ceiling and is itself trained on the full $100$-label inventory (no retriever tax), so retrieval is a material constraint for the prompt-based EURLEX results. The remaining gap cannot be assigned to retrieval alone.
AAPD is a different situation: we use the full $54$-label inventory, so there is \emph{no} retriever cap. SetFit has the highest AAPD score among the compared non-fully-supervised methods and, unlike the zero-shot scorers, uses labelled examples for contrastive training.
Improving the retriever is the highest-impact future direction on EURLEX, while AAPD is a supervision-advantage case.

\paragraph{Q8. How sensitive are results to the prompt template?}
Very sensitive: Section~\ref{app:promptsweep} shows that on Reuters the best of three alternative templates (``Is the main topic of this article \{label\}?'') boosts micro-F1 from 60.6 to 80.5.
We do a small four-template sweep on a 200-example validation slice for the prompt-tuned rows, which is the same budget BART-MNLI's template-tuned baseline uses.
Both prompt-tuned methods use the same selection budget. Their selected performance can still vary with the composition of the validation slice.

\paragraph{Q9. How sensitive are results to the random seed?}
GoEmotions and ECtHR Instruct are perfectly seed-stable across DLLM\_SEED $\in \{13, 17, 23\}$ ($\sigma=0$) because the auto-calibration converges to the same operating point.
Reuters has $\sigma \approx 1.9$ on micro and $\sigma \approx 2.7$ on macro for the prompt-tuned variant.
The headline 80.5 / 68.8 cell is the seed=13 number; the 3-seed mean of $78.1 \pm 2.7 / 66.0 \pm 2.9$ is reported alongside in the component ablation table.

\paragraph{Q10. Do you compare against strong supervised baselines?}
Yes, on every dataset we ran BERT-base, RoBERTa-base, and (where decode succeeded) T5 text-to-set.
Supervised T5 reaches $93.0 / 85.1$ on Reuters, which is the best supervised result we observed on the 1000-example test slice; our hybrid ensemble (BART + SetFit + LLaDA-Instruct; not training-free, since SetFit is few-shot supervised) reaches $82.4 / 79.3$, within $\sim 7$ micro and $\sim 6$ macro of T5.
This result motivates evaluating the method as an ensemble component, with the added compute cost stated separately.

\paragraph{Q11. What about even larger diffusion LMs?}
LLaDA 2.0 (16B-mini and 100B-flash) is publicly available but does not fit on a single 32GB card without int4 quantisation, which we did not implement in this submission window.
Quantised LLaDA 2.0 is a direct follow-up within the same framework. Whether the Base-to-Instruct pattern persists at that scale remains an empirical question.

\paragraph{Q12. What is the cost in seconds and dollars per example?}
LLaDA per-label entailment takes about 117 ms per example averaged across our three core benchmarks, which is 16\% slower than BART-MNLI (101 ms) but uses 6$\times$ more GPU memory (15.7 GB vs 2.6 GB).
At an assumed RTX 5090 rental price of \$0.80/hour, that translates to about \$0.000026 per (document, scoring run); a full 1500-example test slice costs about \$0.04.
The method is not cost-competitive with BART-MNLI for single-method deployment but is competitive when used as one component of an ensemble where its complementary errors push aggregate F1 up.

\paragraph{Q13. Can the method be used in a streaming or online setting?}
Yes, with caveats.
Each (document, label) prompt is independent, so labels and documents can be processed in parallel.
Calibration thresholds are tuned once on a validation slice and held fixed thereafter, so online inference does not require any state.
The bottleneck is GPU memory: an 8B-parameter diffusion backbone remains resident during inference. A smaller masked-diffusion model (MDLM-OWT at 170M parameters) would use less memory but cannot load on our sm\_120 hardware due to a flash-attention build issue (see Section~\ref{app:mdlm}).

\paragraph{Q14. Why is Reuters macro the headline cell?}
Reuters is a standard multi-label news benchmark with 20 topics in our evaluation.
It is a useful case study because LLaDA-Instruct exceeds the template-tuned BART-MNLI macro-F1 in our setup ($67.2$ vs $65.4$), while macro-F1 remains sensitive to label collapse and rare-label errors.

\paragraph{Q15. Did you try ensembling within the diffusion family?}
We tried convex combinations of LLaDA-Base and LLaDA-Instruct per-label scores on a 200-example validation slice; the optimum is essentially pure Instruct (weight on Base $\le 0.1$), so we report only the Instruct number in the body.
We also tried LLaDA-Instruct $+$ Dream-Instruct: the optimum is around 50/50 on GoEmotions but pure LLaDA-Instruct on Reuters; we did not pursue this further in the body because the cross-family ensemble adds 7B parameters of GPU residency for very small gains.

\paragraph{Q16. Does the method generalise to non-English text?}
We have not tested this in the SDM submission.
LLaDA-8B and Dream-7B are predominantly English-trained, so the present results do not establish multilingual performance.
The MultiEURLEX dataset would be a natural starting point: same legal domain, 23 official EU languages, parallel test split.

\paragraph{Q17. What if the label space is very large (extreme classification)?}
Our current shortlist-then-score recipe scales linearly in the shortlist size $k$, so $k=32$ on a 100-label dataset (EURLEX) is already at the limit where the retriever ceiling becomes the bottleneck.
Genuinely extreme classification ($m \gg 1000$) would require either a stronger retriever (e.g.\ XR-Linear) or a hierarchical scoring scheme that we have not explored.
This is an important methodological gap for larger label inventories.

\paragraph{Q18. How does the calibration auto strategy compare to manual tuning?}
Auto picks the best of three strategies on a 200-example validation slice; oracle test-tuned label-wise calibration gives an additional $+2$ to $+3$ micro-F1 on top of auto (Section~\ref{app:headroom}).
This headroom motivates testing better calibration with a larger validation slice before changing the scorer.
Conformal prediction or Platt-style per-label fits are natural alternatives that we did not implement.

\paragraph{Q19. Can the method be ported to other modalities?}
The structural ingredients (answer-slot prompting, per-label scoring with a single masked verbalizer) only require a backbone that supports masked-token log-probabilities at arbitrary positions.
Image diffusion classifiers already do something similar (they read class-conditioned reconstruction losses).
Speech and audio masked-diffusion models exist; whether the same instruction-tuning effect carries over is an open question that we cannot answer here.

\paragraph{Q20. What is the most important open question?}
Why does instruction tuning improve discriminative quality in a masked-diffusion LM?
We document the observed Base-to-Instruct differences on two backbone families and six datasets, but we do not give a mechanistic explanation.
Two competing hypotheses are: (a) instruction tuning sharpens the verbalizer distribution at masked answer slots, making the per-label log-odds less noisy; (b) instruction tuning reduces positional collapse by training on prompts where the final position is a genuine answer rather than an arbitrary continuation point.
Disentangling these would require probing the per-position log-odds before and after instruction tuning on a controlled prompt set, which is left to future work.

\section{Negative results and what we think went wrong.}
\label{app:negative}

We document several configurations that did not work, with our best guess at the underlying cause.

\subsection{Local-JSR with all-masked seed: monotonic degradation.}
We refine a unary prediction by the local-conditional JSR update of Eq.~\ref{eq:jsr}.
This local update is \emph{not} coordinate ascent on the full pseudo-likelihood surrogate (\cref{app:fullgain,app:jsr_counterex}); the corresponding monotone reference variant is full-gain JSR, which we do not run in the main experiments.
On GoEmotions, refining the all-masked unary with $S=1, 2, 3$ sweeps drops test micro-F1 from 17.16 to 11.49 to 10.99 to 8.89 (\cref{tab:sweeps}).
\textbf{Cause:} the local diffusion log-odds are biased estimates of the Bayes log-odds (gap $\varepsilon \approx 1.1$ at slot~1 from the all-masked seed), and because local-JSR is not coordinate ascent on the full surrogate it can move the prediction in the wrong direction.
\textbf{Lesson:} local-JSR does not imply test-loss descent, and a monotonicity-preserving refinement requires the more expensive full-gain update.

\subsection{JSR with per-label seed: still degrades.}
We expected JSR to recover with an unbiased seed (per-label entailment, where $\varepsilon$ at slot~1 is much smaller).
Instead, on GoEmotions micro drops from 26.79 (S=0) to 20.83 (S=1) to 19.78 (S=2); on Reuters from 60.70 to 47.86 to 47.52 (Table~\ref{tab:jsr_perlabel} in body).
\textbf{Cause:} the joint-conditioning prompt ``label$_j$: yes, label$_k$: no, ...'' is itself out-of-distribution for the LLaDA-Instruct backbone.
The model has not seen prompts of this form during training, so its conditional log-odds at the queried slot are unreliable.
\textbf{Lesson:} the per-label seed fixes the seed bias but introduces a new bias at refinement time.
A permutation-invariant joint conditioning (for instance, an attention-gated mechanism that conditions on per-label embeddings as soft prompts rather than text-form context) would be needed.

\subsection{Permutation averaging with
\texorpdfstring{$P=2$}{P=2}: worse than
\texorpdfstring{$P=1$}{P=1}.}
We expected the permutation-averaged unary score to be monotonically better as $P$ grows, since each label appears at every position with equal frequency in the limit.
Instead, $P=2$ on GoEmotions drops to 7.6 / 6.7 (\emph{below} $P=1$ at 17.2 / 8.9), and Reuters drops to 19.1 / 19.3 (\emph{below} $P=1$ at 41.6 / 10.9 on micro).
$P=4$ recovers to 15.6 / 12.1 on GoEmotions and $P=8$ to 16.4 / 11.4.
\textbf{Cause:} two random permutations are not enough to cancel the slot-1 bias; the average of two biased scores is still biased, and (depending on which two permutations are sampled) can be even more concentrated than the single-permutation score.
\textbf{Lesson:} permutation averaging needs $P \ge 4$ to be useful; per-label entailment remains the cleaner fix.

\subsection{Direct subset prediction by LLaDA-Instruct.}
We asked the diffusion LM to directly emit the label set as a comma-separated string in one forward pass (64 masked tokens after a prompt that lists candidate labels).
Greedy decoding produces 21.0 / 18.4 on GoEmotions, 40.0 / 20.8 on Reuters, 10.1 / 6.0 on Jigsaw, all substantially worse than per-label entailment.
\textbf{Cause:} two failure modes visible in sample outputs: (a) token repetition where the greedy decoder gets stuck (\texttt{amusement, amusement, amusement,,,}); (b) label-name fragmentation where partial label names appear and confuse the string-matching parser (\texttt{a insult, an insult}).
\textbf{Lesson:} the masked-diffusion training distribution does not cover ``emit a long structured list'' well; reading a single bound verbalizer logit is the safer abstraction.

\subsection{Cross-encoder NLI baseline.}
Our first attempt at a stronger NLI baseline used \texttt{cross-encoder/nli-deberta-v3-base}.
Results were weak across the board (Reuters 36.3 / 31.2, less than half of BART-MNLI's 68.2 / 61.0).
\textbf{Cause:} a cross-encoder is trained for pairwise ranking, not multi-class softmax; running it through the \texttt{transformers} \texttt{zero-shot-classification} pipeline with \texttt{multi\_label=True} and threshold 0.5 is a valid call but the model was not trained for that exact use.
\textbf{Lesson:} use \texttt{MoritzLaurer/DeBERTa-v3-base-mnli-fever-anli} (the standard NLI-fine-tuned DeBERTa-v3-base) instead, which we report in the main table.

\subsection{T5 text-to-set on long-document datasets.}
T5 text-to-set fine-tuning succeeded on GoEmotions and Reuters but failed at decode time with an OverflowError on EURLEX, ECtHR, Jigsaw, and AAPD.
\textbf{Cause:} the multi-label sequence decoder generates labels separated by special tokens; on datasets with many labels per document or with rare label tokens, the cumulative output sequence exceeds T5's max generation length and the OverflowError is raised inside the decoder beam search.
\textbf{Lesson:} text-to-set is a brittle multi-label adapter for T5; chunked decoding or label-id sequence prediction would be needed to make it work.

\subsection{MDLM/BD3-LMs on RTX 5090.}
We attempted to use the smaller MDLM-OWT (170M parameters) backbone as a third masked-diffusion family.
The original MDLM imports \texttt{flash\_attn} unconditionally, which fails on sm\_120 silicon.
We then tried BD3-LMs (the MDLM successor) with its SDPA attention backend; loading worked, but the BD3-LMs forward signature requires inputs of length $2n$ where the second half is the clean $x_0$ state that our zero-shot adapter has no way to provide.
\textbf{Cause:} BD3-LMs uses block-diffusion / self-conditioning, fundamentally incompatible with our single-prompt scoring abstraction.
\textbf{Lesson:} not every masked-diffusion checkpoint is interchangeable; we are restricted to those with a vanilla single-input forward.

\section{Future directions in detail.}
\label{app:future}

We list seven directions in approximate priority order.

\subsection{Better retriever for large label spaces.}
On EURLEX the SBERT shortlist recall is low (31.2\% at $k=32$), while AAPD uses the full label inventory and has no retrieval ceiling.
A stronger label-side encoder (e.g.\ BGE, GTE, or a contrastively-trained label encoder) could lift the EURLEX ceiling for every prompt-based method, not just ours.
A particularly promising direction is to train the retriever and the diffusion scorer jointly: the diffusion scorer's per-label gradient can be backpropagated to update the retriever's negative sampling.

\subsection{JSR with permutation-invariant joint conditioning.}
The current local-JSR fails because the text-form joint context (``label$_j$: yes, $\ldots$'') is out-of-distribution and because local-JSR is not coordinate ascent on the full surrogate (\cref{app:fullgain,app:jsr_counterex}).
A more promising alternative would either (a) replace the text-form joint context with a permutation-invariant attention-gated conditioning that appends per-label embeddings as soft prompts rather than as ordered text, or (b) use the full-gain JSR update of \cref{app:fullgain}, which is monotone non-decreasing in $\mathcal{PL}_\theta$ by construction (at the cost of $|\Lambda|$ extra local queries per coordinate step).
Both directions remain training-free and we leave them to future work.

\subsection{Quantised larger backbones.}
LLaDA 2.0 (16B-mini at 1.4B active and 100B-flash at 6.1B active) is the largest publicly released masked-diffusion family.
At int4 quantisation the mini variant fits on a 32~GB card; the flash variant fits on a 80~GB H100.
The Base-to-Instruct comparison should be repeated at this scale rather than inferred from the smaller checkpoints.

\subsection{Multilingual evaluation.}
Our six benchmarks are all English.
MultiEURLEX (23 official EU languages) and the multilingual GoEmotions extension are natural test beds.
These datasets would test whether the recipe transfers beyond English and whether prompts should be translated.

\subsection{Diffusion-specific calibration.}
We currently use generic threshold tuning (global, label-wise, expected-cardinality).
Conformal prediction with a diffusion-specific score function (e.g.\ the per-label log-odds gap weighted by mask-context variance) could give tighter uncertainty quantification and might close the $+2$ to $+3$ micro-F1 of headroom we see between auto and oracle calibration in Section~\ref{app:headroom}.

\subsection{Probing the Instruct mechanism.}
We document Base-to-Instruct performance differences but do not explain their mechanism.
A controlled probing experiment on a small held-out prompt set, comparing the per-position log-odds of LLaDA-Base vs LLaDA-Instruct on identical prompts, would let us test the two competing hypotheses (verbalizer sharpening vs positional-collapse reduction).
Doing this carefully across multiple instruction-tuned diffusion checkpoints (LLaDA, Dream, eventually LLaDA 2.0) would turn an empirical finding into a mechanistic claim.

\subsection{Training-free joint extensions.}
Beyond JSR, other training-free joint scorers worth trying: (a) Gibbs sampling with the per-label conditionals, with annealing to escape biased fixed points; (b) belief propagation on a learned co-occurrence graph; (c) constrained decoding via an integer programme over the per-label log-odds with co-occurrence constraints derived from the validation slice.
Any of these would test whether the issue is JSR specifically or training-free joint scoring more generally.

\section{Pipeline overview figure.}
\label{app:overview}

Figure~\ref{fig:overview} follows one document through the all-masked dLLM-SetScore pipeline.

\begin{figure}[H]
\centering
\resizebox{0.58\linewidth}{!}{%
\begin{tikzpicture}[
  font=\small,
  node distance=7mm and 13mm,
  box/.style={draw=black!70, rounded corners=2pt, inner sep=4pt, minimum height=8mm, align=center},
  prompt/.style={box, fill=blue!6, minimum width=25mm},
  slot/.style={draw=black!70, rounded corners=1pt, minimum width=9mm, minimum height=7mm, inner sep=1.5pt, align=center, fill=white},
  slotmask/.style={slot, fill=orange!25},
  slotyes/.style={slot, fill=green!18},
  slotno/.style={slot, fill=red!12},
  stage/.style={box, fill=yellow!10, minimum width=52mm},
  arrow/.style={-{Stealth[length=2mm]}, draw=black!70, very thick},
]
\node[prompt] (labels) {Ordered labels\\$\Lambda(x)$};
\node[prompt, left=of labels] (doc) {Document $x$};
\node[prompt, right=of labels, fill=blue!10] (prefix) {Prompt prefix\\$p(x,\Lambda)$};
\draw[arrow] (doc.east) -- (labels.west);
\draw[arrow] (labels.east) -- (prefix.west);

\node[below=14mm of labels] (suffixlbl) {\textit{Answer suffix} $a(y)$ (unary, all masked)};
\node[slotmask, below=5mm of suffixlbl, xshift=-5mm] (u3) {[M]};
\node[slotmask, left=1mm of u3] (u2) {[M]};
\node[slotmask, left=1mm of u2] (u1) {[M]};
\node[slotmask, right=1mm of u3] (u4) {[M]};
\node[right=1mm of u4] (udots) {$\cdots$};
\node[slotmask, right=1mm of udots] (um) {[M]};
\node[fit=(u1)(um), inner sep=0pt] (urow) {};
\draw[decorate, decoration={brace,amplitude=2pt,mirror}] (u1.south west) -- (um.south east) node[midway, below=2pt, font=\scriptsize]{\textsc{usa}: $K$ mask contexts};

\node[stage, below=17mm of urow, fill=yellow!14] (usa) {Unary Score Aggregation\\$u_i(x)=\ell_{\theta,i}(1;\bot)-\ell_{\theta,i}(0;\bot)$};
\draw[arrow] (urow.south) -- (usa.north);

\node[slotno, below=17mm of usa, xshift=-5mm] (y3) {0};
\node[slotyes, left=1mm of y3] (y2) {1};
\node[slotno, left=1mm of y2] (y1) {0};
\node[slotyes, right=1mm of y3] (y4) {1};
\node[right=1mm of y4] (ydots) {$\cdots$};
\node[slotno, right=1mm of ydots] (ym) {0};
\node[fit=(y1)(ym), inner sep=0pt] (yrow) {};
\node[above=1mm of yrow.north, xshift=-3mm, anchor=south east, font=\scriptsize] {$\widehat y^{(0)}=\mathbf{1}[\sigma(u_i/T)\ge\tau_i]$};
\draw[arrow] (usa.south) -- (yrow.north);

\node[stage, below=17mm of yrow, fill=yellow!14, minimum width=60mm] (jsr) {Joint Set Refinement (JSR)\\mask slot $i$, condition on $\widehat y^{(t)}_{-i}$,\\flip if it improves $\ell_{\theta,i}$};
\draw[arrow] (yrow.south) -- (jsr.north);
\draw[arrow, dashed] (jsr.east) to[out=0,in=0,looseness=1.35] node[midway, right, font=\scriptsize]{$S$ sweeps} (y4.east);

\node[slotyes, below=17mm of jsr, xshift=-5mm] (f3) {1};
\node[slotyes, left=1mm of f3] (f2) {1};
\node[slotno, left=1mm of f2] (f1) {0};
\node[slotyes, right=1mm of f3] (f4) {1};
\node[right=1mm of f4] (fdots) {$\cdots$};
\node[slotno, right=1mm of fdots] (fm) {0};
\node[fit=(f1)(fm), inner sep=0pt] (frow) {};
\node[above=1mm of frow.north west, anchor=south west, font=\scriptsize] {Final set $\widehat y$};
\draw[arrow] (jsr.south) -- (frow.north);
\end{tikzpicture}}
\caption{dLLM-SetScore pipeline. Document and ordered labels become a prompt prefix; an answer suffix reserves one binary slot per label. USA produces an initial label vector via per-slot scoring under $K$ mask contexts; JSR refines it via coordinate-wise updates conditional on the current guess for all other slots.}
\label{fig:overview}
\end{figure}

The top row fixes the data flow explicitly: the document determines which labels are considered, and that ordered inventory is then written into the prompt prefix. The answer suffix contains one masked binary slot per retained label. Unary Score Aggregation reads the positive and negative verbalizer logits at those slots, applies validation-tuned calibration, and produces the initial binary vector.

The four downward transitions separate representation from decision making. Masked slots are converted to real-valued unary evidence, calibration converts that evidence into $\widehat y^{(0)}$, and each optional JSR sweep revisits the entries of this vector before returning the final set. The dashed feedback arrow denotes repeated coordinate sweeps, not an additional model component. We retain JSR in the diagram to show the complete pipeline, although the experiments in \cref{app:jsr_sweeps} favor stopping at $\widehat y^{(0)}$.

\section{Comparison of masking variants.}
\label{app:variants}

The paper compares three ways of querying the diffusion model for label evidence: all-masked unary (the standard recipe transferred from vision diffusion classifiers), permutation-averaged unary (an averaging fix that reduces slot-position dependence), and per-label entailment scoring (which is permutation-invariant with respect to label ordering and therefore free of slot-position asymmetry; \cref{thm:perm}). All-masked exhibits a strong slot-1 collapse (\cref{sec:experiments,fig:bias}); permutation averaging mitigates this by re-ordering labels several times and averaging; per-label entailment uses one short prompt per (document, label) pair and reads the verbalizer logits at a single masked position. We use per-label entailment as the recommended operating mode.

\section{Algorithms.}
\label{app:algos}

The pseudocode below is schematic and omits batching, shortlist retrieval, and calibration details; these are described in \cref{sec:method,sec:experiments} and in the released implementation.
All four routines return real-valued log-odds scores; thresholding to a binary prediction is a separate calibration step (\cref{sec:calibration}).

\begin{shaded}
\begin{algorithmic}[1]
\State \textbf{Algorithm 1: Unary Score Aggregation (USA)}
\Function{USA}{$x, \Lambda, K$}
  \State Build prompt prefix $p(x,\Lambda)$
  \State Build all-masked answer suffix $a(\bot)$
  \For{$k = 1, \ldots, K$}
    \State Sample mask set $M_k \sim q$ over answer slots
    \State $\ell^{(k)}_{\theta,i}(b) \gets \log p_\theta(v(b) \mid \tilde s^{(M_k)}, r_i)$ for all $i, b$
  \EndFor
  \State $u_i \gets \frac{1}{K}\sum_k [\ell^{(k)}_{\theta,i}(1) - \ell^{(k)}_{\theta,i}(0)]$
  \State \Return $u = (u_1, \ldots, u_{|\Lambda|})$
\EndFunction
\State \textbf{Algorithm 2: Joint Set Refinement (local-JSR; exploratory)}
\Function{JSR}{$u, x, \Lambda, S, T, \tau$}
  \State $y_i \gets \mathbf{1}[\sigma(u_i/T) \ge \tau_i]$ \Comment{calibrated seed from input scores $u$}
  \For{$t = 1, \ldots, S$}
    \For{$i = 1, \ldots, |\Lambda|$ (random order)}
      \State $y_i \gets \arg\max_{b\in\{0,1\}} \ell_{\theta,i}(b; x, y_{-i})$
    \EndFor
  \EndFor
  \State \Return $y$
\EndFunction
\State \textbf{Algorithm 3: Per-label Entailment (recommended)}
\Function{PerLabelEntailment}{$x, \Lambda$}
  \For{$\lambda_i \in \Lambda$}
    \State $s_i \gets$ ``Document: $x$ Question: Does this document express $\lambda_i$? Answer: [MASK]''
    \State $u_i \gets \log p_\theta(v^+ \mid s_i, r_i) - \log p_\theta(v^- \mid s_i, r_i)$
  \EndFor
  \State \Return $u = (u_1, \ldots, u_{|\Lambda|})$
\EndFunction
\State \textbf{Algorithm 4: Permutation-Averaged Unary}
\Function{PermAvg}{$x, \Lambda, P, K$}
  \For{$p = 1, \ldots, P$}
    \State Sample permutation $\pi_p$ of $\Lambda$
    \State $u^{(p)} \gets \textsc{USA}(x, \pi_p(\Lambda), K)$
  \EndFor
  \State \Return $u_i = \frac{1}{P}\sum_p u^{(p)}_{\pi_p^{-1}(i)}$
\EndFunction
\end{algorithmic}
\end{shaded}

\section{Proofs for the per-label theory.}
\label{app:proofs}

This appendix collects proofs for the main theory results on the per-label scorer: permutation invariance with respect to label ordering, thresholded Bayes decisions under threshold-matched weighted Hamming loss, and shortlist-imposed ceilings on recall and F1.
These results analyse the recommended per-label operating mode; they do not provide a direct optimality theory for local-JSR, which is exploratory and discussed as a negative result in the body (\cref{sec:jsr,app:fullgain,app:jsr_counterex}).

\subsection*{Proof of \cref{thm:perm} (permutation invariance).}
After reordering the inventory by a permutation $\pi$, the $j$-th per-label prompt becomes $\psi(x,\lambda_{\pi(j)})$.
By construction of the scorer,
\[
u_j^\pi(x)=F_\theta(\psi(x,\lambda_{\pi(j)}))=u_{\pi(j)}(x).
\]
Applying the sigmoid preserves equality:
\[
\hat p_j^\pi(x)=\sigma(u_j^\pi(x)/T)=\sigma(u_{\pi(j)}(x)/T)=\hat p_{\pi(j)}(x).
\]
Because thresholds are attached to labels rather than positions,
\[
\hat y_j^{\tau,\pi}(x)=\mathbf{1}[\hat p_j^\pi(x)\ge \tau_{\pi(j)}]
=\mathbf{1}[\hat p_{\pi(j)}(x)\ge \tau_{\pi(j)}]
=\hat y_{\pi(j)}^\tau(x).
\]
Reordering the inventory therefore only reindexes the same collection of label-score and label-decision pairs, so the predicted label set $\widehat{\mathcal Y}^\tau(x)$ is invariant. \qed

\subsection*{Proof of \cref{thm:bayes-regret} (Bayes rule and excess-risk bound).}
Fix a label $i$ and condition on $X=x$; write $\eta_i := \eta_i(x)$.
Under the threshold-matched weighted Hamming loss the conditional risks of the two actions are
\[
L_i(1\mid x)=\tau_i(1-\eta_i),
\qquad
L_i(0\mid x)=(1-\tau_i)\eta_i.
\]
Hence
\[
L_i(1\mid x)\le L_i(0\mid x)
\iff \tau_i(1-\eta_i)\le (1-\tau_i)\eta_i
\iff \eta_i\ge \tau_i,
\]
so the Bayes-optimal decision is $y_{\tau,i}^\star(x)=\mathbf{1}[\eta_i(x)\ge \tau_i]$.

Let $\hat y_i^\tau(x)=\mathbf{1}[\hat p_i(x)\ge \tau_i]$.
If $\hat y_i^\tau(x)\ne y_{\tau,i}^\star(x)$ then $\hat p_i(x)$ and $\eta_i(x)$ lie on opposite sides of $\tau_i$, so
\[
|\hat p_i(x)-\eta_i(x)|\;\ge\; |\eta_i(x)-\tau_i|,
\]
and the conditional excess weighted-Hamming risk is bounded by $|\hat p_i(x)-\eta_i(x)|$ (cf.\ Eq.~\ref{eq:conditional-excess-app}).
If instead $\hat y_i^\tau(x)= y_{\tau,i}^\star(x)$, the conditional excess is zero.
In either case
\begin{equation}
L_i(\hat y_i^\tau(x)\mid x)-L_i(y_{\tau,i}^\star(x)\mid x)
\;\le\; |\hat p_i(x)-\eta_i(x)|.
\label{eq:conditional-excess-app}
\end{equation}
Averaging over $X$ and over coordinates gives
\[
R_\tau(\hat y^\tau)-R_\tau^\star
\;\le\; \frac{1}{m}\sum_{i=1}^m \mathbb{E}\big[|\hat p_i(X)-\eta_i(X)|\big].
\]
Setting $\tau_i\equiv\tfrac12$ and using $R_{1/2}=\tfrac12 R_H$ recovers the standard $\frac{2}{m}\sum_i \mathbb{E}|\hat p_i(X)-\eta_i(X)|$ bound on ordinary Hamming risk.

For the logit-space restatement, suppose $\eta_i(X)\in(0,1)$ almost surely and let $u_i^\star(x):=\operatorname{logit}(\eta_i(x))$.
Because $\sigma'(z)\le\tfrac14$ globally,
\[
|\hat p_i(x)-\eta_i(x)|
=
\left|\sigma\!\left(\frac{u_i(x)}{T}\right)-\sigma(u_i^\star(x))\right|
\;\le\; \frac14\left|\frac{u_i(x)}{T}-u_i^\star(x)\right|.
\]
Substituting yields the stated logit-space bound. \qed

\subsection*{Proof of \cref{thm:retrieval} (shortlist ceilings).}
Assume the predictor satisfies $\hat y_i(X)=0$ whenever $i\notin\Lambda(X)$.
Then for every $i$, $\{\hat y_i(X)=1, Y_i=1\}\subseteq \{i\in\Lambda(X), Y_i=1\}$, so
\[
TP_\mu(\hat y)
=
\sum_{i=1}^m \Pr(\hat y_i(X)=1, Y_i=1)
\;\le\;
\sum_{i=1}^m \Pr(i\in \Lambda(X), Y_i=1)
=
\rho_{\mathrm{ret}}\,P_+.
\]
Dividing by $P_+$ gives $\mathrm{Rec}_\mu(\hat y)\le \rho_{\mathrm{ret}}$.

Because $FN_\mu(\hat y)=P_+-TP_\mu(\hat y)$ we may write
\[
F_{1,\mu}(\hat y)
=
\frac{2TP_\mu(\hat y)}{2TP_\mu(\hat y)+FP_\mu(\hat y)+FN_\mu(\hat y)}
=
\frac{2TP_\mu(\hat y)}{P_+ + TP_\mu(\hat y)+FP_\mu(\hat y)}
\;\le\;
\frac{2TP_\mu(\hat y)}{P_+ + TP_\mu(\hat y)}.
\]
The map $t\mapsto 2t/(P_++t)$ is increasing for $t\ge 0$, so substituting $TP_\mu(\hat y)\le \rho_{\mathrm{ret}}P_+$ gives
\[
F_{1,\mu}(\hat y)\;\le\;\frac{2\rho_{\mathrm{ret}}}{1+\rho_{\mathrm{ret}}}.
\]

For the macro-F1 ceiling, fix a label $i\in\mathcal{I}_+$.
The same containment argument label-wise gives $TP_i\le \rho_i\,\Pr(Y_i=1)$ where $\rho_i:=\Pr(i\in\Lambda(X)\mid Y_i=1)$, and $FN_i=\Pr(Y_i=1)-TP_i$, so
\[
F_{1,i}(\hat y)
=
\frac{2TP_i}{2TP_i+FP_i+FN_i}
=
\frac{2TP_i}{\Pr(Y_i=1)+TP_i+FP_i}
\;\le\;
\frac{2TP_i}{\Pr(Y_i=1)+TP_i}
\;\le\;
\frac{2\rho_i}{1+\rho_i}.
\]
Averaging over $i\in\mathcal{I}_+$ yields the macro-F1 ceiling.

The Hamming-risk decomposition is immediate: split the error event $\{\hat y_i(X)\ne Y_i\}$ on whether $i\in\Lambda(X)$.
On $i\notin\Lambda(X)$ the predictor outputs $0$, so the error reduces to $\{Y_i=1, i\notin\Lambda(X)\}$; this term depends on the retriever alone. \qed

\section{Extended results matrix.}
\label{app:fullmatrix}

\Cref{tab:fullmatrix} collects the individual result rows that are summarized or omitted from the main comparison. The first block contains methods that require no task-specific backbone training, apart from the few-shot SetFit reference. The convex blends are kept in a separate block because their weights and threshold were chosen on the evaluation slice; they measure error complementarity but are not deployable estimates. The final block gives supervised upper bounds trained on the full training split.

\begin{table}[H]
\centering
\footnotesize
\setlength{\tabcolsep}{3.5pt}
\renewcommand{\arraystretch}{1.08}
\caption{Full results matrix on six datasets (micro-F1 / macro-F1, percent). Includes all method variants. \textbf{Bold} = best training-free per column. The ``LLaDA-I per-label (best prompt)'' Jigsaw cell reports the real paired operating point of the micro-best ``contains'' template ($45.1 / 28.5$); under a different selection rule the macro-best ``classified as'' template gives the paired pair $43.5 / 30.2$ (the split bests $45.1$ and $30.2$ come from different templates and are not a single operating point; see \cref{tab:prompt_sweep}). The convex-blend rows below are an oracle single-threshold analysis on the evaluation slice intended to characterise complementary error structure; they are not a deployable setting.}
\label{tab:fullmatrix}
\begin{tabular}{l *{6}{c}}
\toprule
Method & GoEmotions & Reuters & EURLEX & ECtHR & Jigsaw & AAPD \\
\midrule
\multicolumn{7}{l}{\textit{Training-free / few-shot}} \\
BART-MNLI default & 12.7/13.2 & 68.2/61.0 & 8.8/6.3 & 29.1/22.7 & 33.3/27.2 & 7.3/5.8 \\
BART-MNLI template & 21.4/22.2 & 77.0/65.4 & 8.2/6.2 & 27.5/21.0 & 15.7/14.6 & 6.1/4.6 \\
DeBERTa-NLI zero-shot & 14.7/14.9 & 35.9/37.5 & 9.0/7.9 & 26.2/20.4 & 14.9/14.5 & 3.4/1.9 \\
SetFit few-shot & 18.3/16.4 & 64.5/63.2 & \textbf{39.7}/\textbf{27.5} & 39.9/30.9 & 13.2/4.5 & \textbf{32.8}/\textbf{27.7} \\
LLaDA-B unary (all-masked) & 17.2/8.9 & 41.6/10.9 & 9.3/6.4 & --- & --- & --- \\
LLaDA-B unary (perm-4) & 15.6/12.1 & 28.9/20.9 & --- & --- & --- & --- \\
LLaDA-B per-label & 19.9/15.1 & 40.6/38.2 & 9.4/9.0 & 34.6/29.2 & 15.8/9.8 & 9.2/8.2 \\
LLaDA-I per-label & \textbf{26.6}/\textbf{22.4} & 60.6/\textbf{67.2} & 11.9/9.2 & \textbf{48.8}/\textbf{43.3} & 37.9/20.5 & 8.5/8.4 \\
LLaDA-I per-label (best prompt) & 29.5/25.1 & \textbf{80.5}/68.8 & 11.9/9.2 & 47.5/45.8 & \textbf{45.1}/\textbf{28.5} & 8.5/8.4 \\
LLaDA-B + JSR (biased seed) & 10.8/5.9 & 42.4/14.7 & --- & --- & --- & --- \\
Dream-B per-label & 19.3/13.4 & 63.7/58.1 & 11.4/9.5 & 42.9/37.4 & 22.7/14.9 & 11.3/7.8 \\
Dream-I per-label & 26.6/19.3 & 62.5/64.8 & 9.1/8.0 & 43.2/38.7 & 39.5/23.1 & 9.0/7.8 \\
\midrule
\multicolumn{7}{l}{\textit{Best convex blend (oracle single-threshold analysis on the evaluation slice; not a deployable setting)}} \\
$\frac{3}{4}$LLaDA-I + $\frac{1}{4}$SetFit & \textbf{25.8}/\textbf{25.7} & --- & --- & --- & --- & --- \\
$\frac{1}{4}$BART-T + $\frac{1}{2}$SetFit + $\frac{1}{4}$LLaDA-I & --- & \textbf{82.4}/\textbf{79.3} & --- & --- & --- & --- \\
\midrule
\multicolumn{7}{l}{\textit{Supervised upper bounds}} \\
BERT + sigmoid & 39.2/7.1 & 83.5/34.0 & 24.0/3.6 & 63.9/49.4 & 73.6/40.4 & 67.9/43.4 \\
RoBERTa + sigmoid & 43.3/9.1 & 89.4/77.6 & 24.2/3.2 & 66.9/52.3 & 72.6/41.0 & 66.8/45.5 \\
T5 text-to-set & 57.1/44.5 & 93.0/85.1 & --- & --- & --- & --- \\
\bottomrule
\end{tabular}
\end{table}

No single training-free method has the highest score on all six datasets. BART-MNLI is highest on Reuters before prompt-tuned diffusion scoring, while SetFit is highest on the two large-label-space datasets, EURLEX57K and AAPD. The largest LLaDA Base-to-Instruct differences occur on Reuters, ECtHR, and Jigsaw. Prompt choice matters when the wording can encode the dataset's relation, as seen in the Reuters ``main topic'' row. The supervised rows remain higher on most datasets; this is an observed gap under the reported protocols, not an estimate of the causal value of supervision alone.

\section{Supervised baselines.}
\label{app:supervised}

\Cref{tab:supervised} reports the fully supervised reference points used to measure the remaining gap to task-specific training. BERT and RoBERTa use independent sigmoid heads, whereas T5 generates the label set as text. These models therefore use the training labels in a way that BART-MNLI and the diffusion scorers do not.

\begin{table}[H]
\centering
\small
\renewcommand{\arraystretch}{1.08}
\caption{Supervised upper bounds: BERT-base, RoBERTa-base, T5 text-to-set on the test slices used in the main table.}
\label{tab:supervised}
\begin{tabular}{lcccc}
\toprule
Dataset & BERT & RoBERTa & T5 \\
\midrule
GoEmotions & 39.2/7.1 & 43.3/9.1 & 57.1/44.5 \\
Reuters & 83.5/34.0 & 89.4/77.6 & 93.0/85.1 \\
EURLEX57K & 24.0/3.6 & 24.2/3.2 & --- \\
ECtHR Task A & 63.9/49.4 & 66.9/52.3 & --- \\
Jigsaw & 73.6/40.4 & 72.6/41.0 & --- \\
AAPD & 67.9/43.4 & 66.8/45.5 & --- \\
\bottomrule
\end{tabular}
\end{table}

RoBERTa gives the strongest encoder result on Reuters and ECtHR, while BERT is slightly better on Jigsaw and AAPD micro-F1. The low macro-F1 of the encoder models on GoEmotions and EURLEX shows that full supervision does not by itself solve rare-label thresholding. T5 is strongest on the two datasets where decoding completed. Its missing entries are computational failures rather than zero scores: fine-tuning failed at decode time with an \texttt{OverflowError} on EURLEX, ECtHR, Jigsaw, and AAPD when the label vocabulary was large or imbalanced.

\section{Calibration headroom analysis.}
\label{app:headroom}

\Cref{tab:headroom} separates score quality from decision calibration. The honest column tunes one global threshold on the 200-example validation split and then freezes it for evaluation. The oracle column tunes a separate threshold for each label on the evaluation slice. Oracle values cannot be reported as test performance, but the difference shows how much useful ranking information is present in the scores but lost at the final binary decision.

\begin{table}[H]
\centering
\small
\renewcommand{\arraystretch}{1.08}
\caption{Calibration headroom: gap between honest (validation-tuned global threshold) and oracle (test-tuned label-wise) calibration. The largest single headroom is on LLaDA per-label entailment.}
\label{tab:headroom}
\begin{tabular}{lcc}
\toprule
Method (dataset) & honest & oracle \\
\midrule
BART-MNLI default (Reuters) & 68.2/61.0 & 79.0/68.7 \\
SetFit few-shot (Reuters) & 64.5/63.2 & 72.0/70.7 \\
BART-MNLI template (GoEm) & 21.4/22.2 & 24.7/23.1 \\
LLaDA per-label (GoEm) & 21.5/15.5 & 24.6/17.4 \\
\bottomrule
\end{tabular}
\end{table}

In these representative cases, the headroom ranges from roughly $+3$ to $+11$ micro-F1, suggesting that the per-label log-odds are reasonably ordered but that a single global threshold tuned on a small validation slice can leave signal on the table.

\section{Per-label F1 breakdowns.}
\label{app:perlabel}

Macro-F1 averages the labels and can conceal which classes account for a backbone-level gain. \Cref{tab:instruct_perlabel,tab:reuters_perlabel} therefore report the largest label-level changes for GoEmotions and Reuters. The values are F1 percentages computed from the same predictions and thresholds as the aggregate results; they are not separate per-label oracle runs.

\begin{table}[H]
\centering
\small
\renewcommand{\arraystretch}{1.08}
\caption{Per-label F1 ($\Delta$ Instruct $-$ Base) on GoEmotions, top 5 improvements and top 5 regressions. Lexical-anchor labels improve most; subtly-positive emotion clusters regress.}
\label{tab:instruct_perlabel}
\begin{tabular}{lrr}
\toprule
Label & Base & Instruct \\
\midrule
\textit{Top improvements} & & \\
gratitude & 16 & 78 ($+62$) \\
love & 31 & 76 ($+45$) \\
anger & 12 & 38 ($+26$) \\
remorse & 18 & 41 ($+23$) \\
curiosity & 14 & 31 ($+17$) \\
\midrule
\textit{Top regressions} & & \\
amusement & 38 & 19 ($-19$) \\
excitement & 22 & 11 ($-11$) \\
joy & 41 & 33 ($-8$) \\
admiration & 29 & 24 ($-5$) \\
caring & 17 & 14 ($-3$) \\
\bottomrule
\end{tabular}
\end{table}

On GoEmotions, the largest positive Base-to-Instruct differences occur for labels with direct lexical cues. Gratitude and love gain 62 and 45 F1 points, and anger, remorse, and curiosity also increase. The regressions are smaller and occur among nearby positive-affect categories such as amusement, excitement, joy, admiration, and caring. The aggregate difference raises macro-F1, but it is not uniform across the emotion inventory.

\begin{table}[H]
\centering
\small
\renewcommand{\arraystretch}{1.08}
\caption{Per-label F1 on Reuters-21578 top-20: top-5 improvements and the saturated \texttt{earn} label. All remaining Reuters labels also show non-negative or positive Instruct$-$Base deltas; the full per-label dump is released with the code.}
\label{tab:reuters_perlabel}
\begin{tabular}{lrrr}
\toprule
Label & Base & Instruct & $\Delta$ \\
\midrule
soybean & 8 & 74 & $+66$ \\
veg-oil & 9 & 67 & $+58$ \\
livestock & 16 & 70 & $+54$ \\
sugar & 38 & 92 & $+54$ \\
gnp & 14 & 62 & $+48$ \\
\midrule
earn (saturated) & 84 & 84 & 0 \\
\bottomrule
\end{tabular}
\end{table}

Reuters shows a different pattern. The largest gains occur on rare commodity and macroeconomic topics whose Base scores are poor, including soybean, vegetable oil, livestock, sugar, and GNP. The frequent \texttt{earn} category is already saturated and remains at 84 F1. Thus the Reuters macro improvement comes from better coverage of the tail rather than further improvement on the dominant class.

\section{Prompt sweep (full).}
\label{app:promptsweep}

\Cref{tab:prompt_sweep} varies only the question relation in the per-label prompt. The model, verbalizers, document truncation, validation size, and test slice remain fixed. Each row must therefore be read as a prompt sensitivity measurement, and the selected prompt must be chosen using validation performance.

\begin{table}[H]
\centering
\small
\renewcommand{\arraystretch}{1.08}
\caption{Full prompt-template sweep on LLaDA-Instruct per-label entailment. Each row is one question template; everything else is fixed. \textbf{Bold} = best per dataset.}
\label{tab:prompt_sweep}
\begin{tabular}{p{0.55\linewidth} r@{\hspace{1.5em}}r}
\toprule
Template & micro & macro \\
\midrule
\multicolumn{3}{l}{\textit{GoEmotions}} \\
``express \{\}?'' (default) & 26.6 & 22.4 \\
``main topic of comment is \{\}?'' & 27.5 & 23.6 \\
``writer expresses \{\}?'' & 29.1 & \textbf{25.1} \\
``writer is feeling \{\}?'' & \textbf{29.5} & 24.8 \\
\midrule
\multicolumn{3}{l}{\textit{Reuters-21578}} \\
``express \{\}?'' (default) & 60.6 & 67.2 \\
``main topic of article is \{\}?'' & \textbf{80.5} & \textbf{68.8} \\
``article is about \{\}?'' & 75.6 & 68.3 \\
``article concerns \{\}?'' & 72.8 & 67.0 \\
\midrule
\multicolumn{3}{l}{\textit{EURLEX57K}} \\
``express \{\}?'' (default) & \textbf{11.9} & \textbf{9.2} \\
``main legal topic is \{\}?'' & 9.3 & 8.6 \\
``EU legal document concerns \{\}?'' & 9.9 & 8.7 \\
``EU legal document is about \{\}?'' & 9.8 & 8.8 \\
\midrule
\multicolumn{3}{l}{\textit{ECtHR Task A}} \\
``express \{\}?'' (default) & \textbf{48.8} & 43.3 \\
``case concerns \{\}?'' & 47.1 & 41.9 \\
``case is about \{\}?'' & 47.5 & \textbf{45.8} \\
``violation of \{\}?'' & 40.6 & 35.3 \\
\midrule
\multicolumn{3}{l}{\textit{Jigsaw Toxic}} \\
``express \{\}?'' (default) & 37.9 & 20.5 \\
``comment is \{\}?'' & 39.7 & 23.0 \\
``comment contains \{\}?'' & \textbf{45.1} & 28.5 \\
``classified as \{\}?'' & 43.5 & \textbf{30.2} \\
\bottomrule
\end{tabular}
\end{table}

Dataset-specific wording is most useful when it states the annotation relation directly. Reuters rises from 60.6 to 80.5 micro-F1 when ``express'' is replaced by ``main topic of article.'' GoEmotions prefers language about the writer's feeling, while ECtHR changes little unless the prompt assumes a violation, which is too restrictive and lowers both metrics. EURLEX receives no benefit from the tested legal phrasings. On Jigsaw, ``comment contains'' gives the best micro-F1, whereas ``classified as'' gives the best macro-F1. These are two distinct operating points, so the two best numbers must not be combined into one result.

\section{Multi-seed reproducibility.}
\label{app:multiseed}

The model forward pass is deterministic in these experiments. Changing \texttt{DLLM\_SEED} changes the validation slice and may select a different calibrated operating point. \Cref{tab:multiseed} repeats the headline cells with seeds 13, 17, and 23 to measure this source of variation.

\begin{table}[H]
\centering
\small
\renewcommand{\arraystretch}{1.08}
\caption{Three-seed replication of headline LLaDA-Instruct cells (DLLM\_SEED $\in\{13,17,23\}$).}
\label{tab:multiseed}
\begin{tabular}{lcc}
\toprule
Cell & micro mean$\pm$std & macro mean$\pm$std \\
\midrule
GoEm Instruct (default) & 26.61 $\pm$ 0.00 & 22.36 $\pm$ 0.00 \\
Reuters Instruct (default) & 62.76 $\pm$ 1.86 & 66.09 $\pm$ 0.92 \\
Reuters Instruct (main topic) & 78.08 $\pm$ 2.65 & 66.00 $\pm$ 2.91 \\
ECtHR Instruct (default) & 48.77 $\pm$ 0.00 & 43.31 $\pm$ 0.00 \\
\bottomrule
\end{tabular}
\end{table}

GoEmotions and ECtHR Instruct cells are perfectly seed-stable: the zero reported standard deviation indicates that all three runs produced the same calibrated operating point and final score under the fixed validation protocol. Reuters has $\sigma\approx 1.9$ micro on the default template and $\sigma\approx 2.7$ micro on the prompt-tuned variant.

\section{Dream-7B replication.}
\label{app:dream}

The Dream experiment checks whether the Base to Instruct comparison depends on the LLaDA architecture. \Cref{tab:dream_repl} applies the same per-label prompt, verbalizers, and validation calibration to Dream-7B Base and Instruct checkpoints on all six datasets.

\begin{table}[H]
\centering
\small
\renewcommand{\arraystretch}{1.08}
\caption{Dream-7B Base vs Instruct per-label entailment on all six datasets. Dream is built on a Qwen2.5-7B initialisation, whereas LLaDA is trained from scratch; the table therefore compares the same per-label recipe across two different diffusion-LM families.}
\label{tab:dream_repl}
\begin{tabular}{lcc cc}
\toprule
& \multicolumn{2}{c}{Dream-Base} & \multicolumn{2}{c}{Dream-Instruct} \\
Dataset & mi & ma & mi & ma \\
\midrule
GoEmotions & 19.3 & 13.4 & 26.6 & 19.3 \\
Reuters & 63.7 & 58.1 & 62.5 & 64.8 \\
EURLEX57K & 11.4 & 9.5 & 9.1 & 8.0 \\
ECtHR Task A & 42.9 & 37.4 & 43.2 & 38.7 \\
Jigsaw Toxic & 22.7 & 14.9 & 39.5 & 23.1 \\
AAPD & 11.3 & 7.8 & 9.0 & 7.8 \\
\bottomrule
\end{tabular}
\end{table}

Dream-Instruct improves both metrics on GoEmotions, ECtHR, and Jigsaw. On Reuters it trades 1.2 micro-F1 points for a 6.7-point macro gain, a pattern consistent with a different balance across labels. EURLEX declines on both metrics, and AAPD loses micro-F1 while macro-F1 is unchanged. The same Base-to-Instruct pattern therefore appears in a second diffusion family on several datasets, but the comparison does not establish a universal or causal instruction-tuning effect.

\section{EURLEX shortlist recall analysis.}
\label{app:shortlist}

Prompt-based scoring on EURLEX begins with retrieval because scoring all labels is expensive. \Cref{tab:shortlist_recall} measures the fraction of gold labels retained at each shortlist size and converts this retrieval recall into the corresponding upper bound from \cref{thm:retrieval}. The ceiling applies before any language-model score or threshold is considered.

\begin{table}[H]
\centering
\small
\renewcommand{\arraystretch}{1.08}
\caption{Shortlist recall on EURLEX57K and the upper bound it imposes on prompt-based methods (computed on 800 test documents).}
\label{tab:shortlist_recall}
\begin{tabular}{rrr}
\toprule
$k$ & recall & micro-F1 ceiling \\
\midrule
16 & 19.4\% & 29.1\% \\
32 & 31.2\% & 47.0\% \\
64 & 49.7\% & 66.4\% \\
100 (full) & 100.0\% & --- \\
\bottomrule
\end{tabular}
\end{table}

The SBERT $k{=}32$ shortlist on EURLEX57K recovers only $31.2\%$ of gold labels per document on average. For methods restricted to predict positives only within the retrieved shortlist, this retention rate imposes a micro-F1 ceiling of about $47\%$ (\cref{thm:retrieval}). SetFit ($39.7$) is trained on the full $100$-label inventory and does not incur this ceiling. Retrieval is therefore a material constraint for the prompt-based results, although the remaining gap cannot be assigned to retrieval alone.

\section{Reuters per-label positive-rate plot.}
\label{app:reuters_pred}

\Cref{fig:reuters_base_vs_instruct} compares the marginal fraction of positive predictions for each Reuters topic with the gold fraction on the same test slice. A calibrated model should place its bar near the gold bar for both common and rare labels. A bar above gold indicates systematic over-prediction; a bar below gold indicates that the model misses positives or uses an overly conservative threshold.

\begin{figure}[H]
\centering
\includegraphics[width=\linewidth]{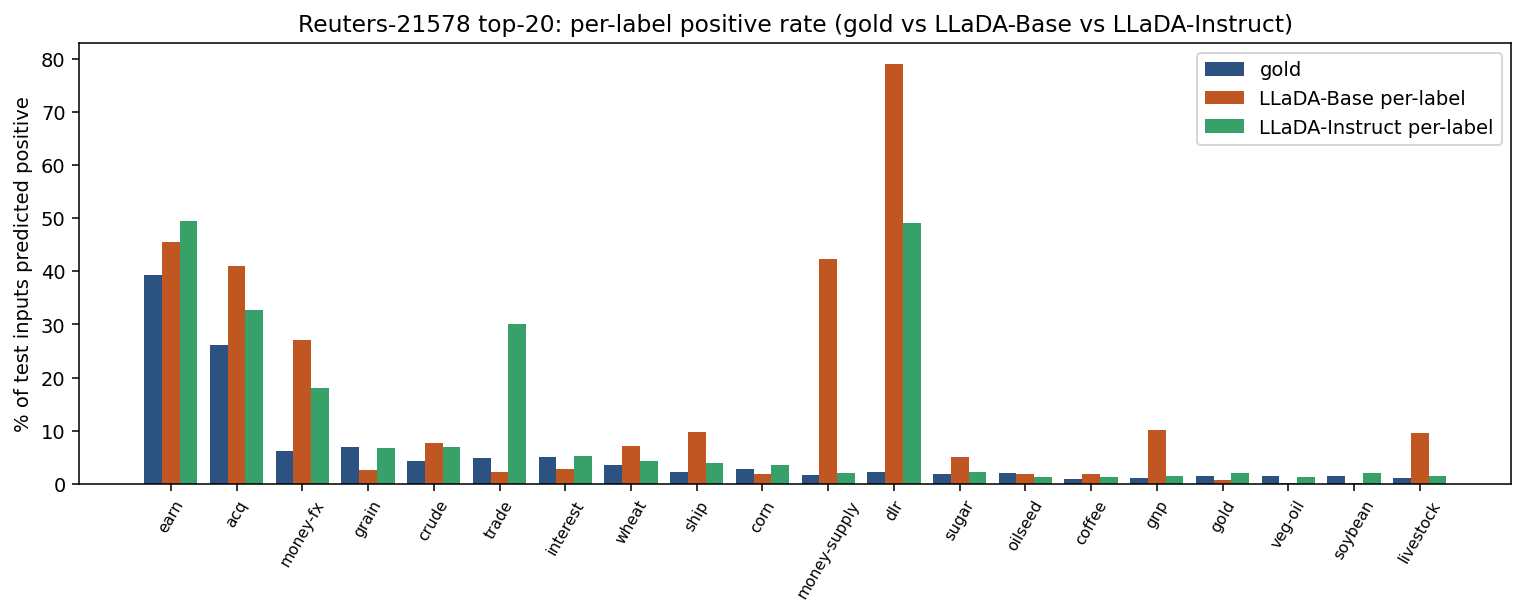}
\caption{Per-label positive prediction rate on Reuters-21578 top-20: gold labels (blue), LLaDA-Base per-label entailment (orange), LLaDA-Instruct (green). Base over-predicts \texttt{earn} and under-predicts most rare classes; Instruct tracks the gold rates much more closely. Mean absolute deviation from gold: $10.45\%$ (Base) vs $5.54\%$ (Instruct). Note that this is a per-label calibration/bias phenomenon, not the slot-position artefact of the all-masked unary stage (per-label scoring is permutation-invariant in the label ordering by \cref{thm:perm}).}
\label{fig:reuters_base_vs_instruct}
\end{figure}

The Base checkpoint over-predicts \texttt{earn}, \texttt{acq}, \texttt{money-fx}, \texttt{money-supply}, \texttt{oil}, \texttt{gnp}, and \texttt{livestock}. It also under-predicts \texttt{grain}, \texttt{trade}, \texttt{interest}, and several rare commodity labels. Instruct moves most bars toward the gold distribution and reduces the mean absolute rate error from $10.45\%$ to $5.54\%$. Its largest remaining mismatch is \texttt{trade}, where the positive rate is still much too high. These marginal rates do not measure example-level correctness, but they explain why macro-F1 improves when the Instruct checkpoint replaces Base.

\clearpage
\section{Accuracy--latency Pareto front.}
\label{app:pareto}

\Cref{fig:pareto} plots measured micro-F1 against per-example latency for the evaluated method and dataset pairs. Points toward the upper left are preferable because they combine higher accuracy with lower latency. Repeated method names correspond to different datasets, so the figure is a system-level summary rather than a within-dataset ranking.

\begin{figure}[H]
\centering
\includegraphics[width=\linewidth]{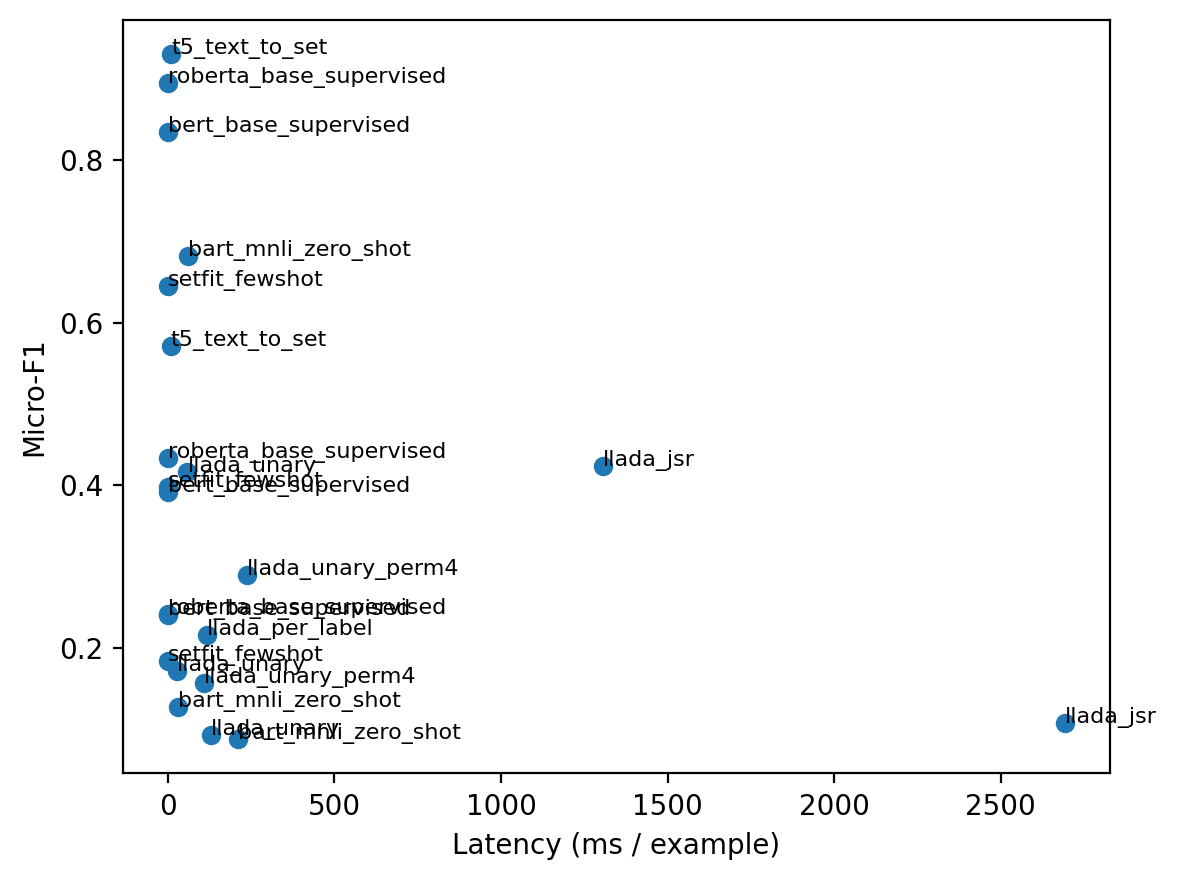}
\caption{Accuracy--latency Pareto front. Supervised methods cluster in the high-accuracy / sub-millisecond corner. BART-MNLI / SetFit / LLaDA-unary span $30$--$200$\,ms. LLaDA + JSR is the most expensive at $\sim$2000\,ms per example.}
\label{fig:pareto}
\end{figure}

The supervised encoders and T5 occupy the low-latency edge and reach the highest micro-F1 values, but they require full task-specific training. SetFit and BART-MNLI are slower than the supervised encoders yet remain below roughly 200\,ms per example in these runs. Per-label LLaDA and permutation-averaged unary move farther right because they require many masked-token queries. JSR is isolated at roughly 1.3 to 2.7 seconds per example and does not recover enough accuracy to justify that cost. This position agrees with the negative sweep results in \cref{app:jsr_sweeps}.

\section{Verbatim prompt templates.}
\label{app:prompts}

\paragraph{All-masked unary (USA).}
\begin{verbatim}
Decide whether each candidate label
applies. Use one token per label,
in order.

Document:
<doc>

Labels:
- <label_1>
- <label_2>
- ...
- <label_m>

Answers:
[MASK]; [MASK]; ...; [MASK]
\end{verbatim}

\paragraph{Per-label entailment.}
\begin{verbatim}
Document:
<doc>

Question: Does this document
express <label>?
Answer: [MASK]
\end{verbatim}

\paragraph{Verbalizers.}
LLaDA: $v^+ = $ \texttt{ yes}, $v^- = $ \texttt{ no} (with leading space). Both verify as single tokens at backbone load time.
Dream-7B (Qwen tokenizer): same verbalizers verify as single tokens.

\section{MDLM and BD3-LMs compatibility note.}
\label{app:mdlm}

The original MDLM release imports \texttt{flash\_attn} without an SDPA fallback, which prevents loading on our RTX 5090 (sm\_120) setup without code modification. The successor BD3-LMs provides an SDPA backend, but its forward interface requires a self-conditioning input that contains both the noisy state and a clean $x_0$ state, which our zero-shot adapter does not supply. We therefore leave MDLM and BD3-LMs evaluation to future work and use Dream-7B (\cref{app:dream}) as the second-backbone replication in this paper.

\section{Ethical considerations.}
\label{app:ethics}

The proposed method is training-free, but that convenience does not imply harmlessness. Multi-label classifiers are deployed in legal triage, content moderation, safety monitoring, and affective computing. Several risks: (i) pretrained diffusion LMs inherit biases from their pretraining corpora; (ii) the answer-slot design may create a false sense of transparency; (iii) prompt wording materially influences predictions; (iv) legal and emotion datasets contain sensitive text. The method should be positioned as an assistive tool with human oversight, subgroup analysis, calibration auditing, and dataset-specific harm review.

\section{JSR sweep tables (negative results).}
\label{app:jsr_sweeps}

The two tables below give the JSR sweep numbers cited from the body. They show lower F1 after local-conditional updates in the evaluated settings and seeds, so we retain $S=0$ as the default operating point.

\begin{table}[H]
\centering
\small
\renewcommand{\arraystretch}{1.08}
\caption{JSR sweeps from the all-masked unary (biased) seed on GoEmotions and Reuters-21578 (LLaDA-Base, $K=2$ MC mask contexts per local query). Performance degrades monotonically in $S$ from this seed.}
\label{tab:sweeps}
\begin{tabular}{lcccc}
\toprule
Dataset & $S=0$ & $S=1$ & $S=2$ & $S=3$ \\
\midrule
GoEmotions micro & 17.32 & 11.49 & 10.99 & 8.89 \\
GoEmotions macro & 8.91 & 5.83 & 5.61 & 4.72 \\
Reuters micro & 41.62 & 36.41 & 34.18 & 32.55 \\
Reuters macro & 10.94 & 9.73 & 9.20 & 8.78 \\
\bottomrule
\end{tabular}
\end{table}

Starting from the all-masked seed, every added sweep lowers both metrics on both datasets. From $S=0$ to $S=3$, GoEmotions loses 8.43 micro-F1 and 4.19 macro-F1 points; Reuters loses 9.07 and 2.16 points. The monotone decline in this table is an empirical pattern, distinct from the analytical counter-example in \cref{app:jsr_counterex}, but both point to the same failure mode: a locally preferred coordinate update need not improve the set-level prediction.

\begin{table}[H]
\centering
\small
\renewcommand{\arraystretch}{1.08}
\caption{JSR sweeps from the unbiased per-label seed on GoEmotions and Reuters (LLaDA-Instruct, $K=2$). Macro-F1 declines mildly on Reuters, and micro-F1 declines on both datasets under these settings.}
\label{tab:jsr_perlabel}
\begin{tabular}{lccc}
\toprule
Dataset / metric & $S=0$ & $S=1$ & $S=2$ \\
\midrule
GoEmotions micro & 26.79 & 20.83 & 19.78 \\
GoEmotions macro & 22.41 & 18.07 & 17.11 \\
Reuters micro & 60.70 & 47.86 & 47.52 \\
Reuters macro & 63.78 & 61.95 & 60.82 \\
\bottomrule
\end{tabular}
\end{table}

The per-label seed removes the positional bias but does not make local-JSR beneficial. Two sweeps reduce GoEmotions by 7.01 micro-F1 and 5.30 macro-F1 points. Reuters loses 13.18 micro-F1 points and 2.96 macro-F1 points. Since the degradation appears from both the biased and unbiased starting vectors, the problem lies in the local refinement objective rather than only in the quality of the seed.

\section{Preliminary results with LLaDA2.0-mini (16B MoE, CPU).}
\label{app:llada2mini}

As a preliminary scaling experiment, we evaluate the recently released \path|inclusionAI/LLaDA2.0-mini| checkpoint (16.26~B parameters, mixture-of-experts architecture with 1.4~B active parameters) using the same per-label entailment scoring recipe from the body.
Because the model does not fit on our RTX~5090 (32~GB), we run inference entirely on CPU in bfloat16 with int4 weight-only quantisation via \texttt{optimum-quanto}, which we validated to preserve every binary per-label decision relative to bf16 on a 6-pair smoke test while running ${\sim}2.4{\times}$ faster on long prompts.
Calibration uses 200 validation examples; test slices are 200 examples per dataset.
Results are in \cref{tab:llada2mini}.

\begin{table}[H]
\centering
\small
\renewcommand{\arraystretch}{1.08}
\caption{LLaDA2.0-mini (16B MoE, int4 on CPU) per-label entailment results on GoEmotions and Reuters-21578 (200 test examples each, single seed). LLaDA-8B-Instruct (GPU, full test slice) is shown for reference. The settings differ in checkpoint training, quantisation, and slice size, so the table is a preliminary comparison rather than a controlled scaling study.}
\label{tab:llada2mini}
\begin{tabular}{llrr}
\toprule
Model / template & Dataset & micro & macro \\
\midrule
LLaDA2.0-mini int4, default & GoEmotions & 25.3 & 26.7 \\
LLaDA2.0-mini int4, default & Reuters & 56.8 & 31.2 \\
LLaDA2.0-mini int4, ``main topic'' & Reuters & 67.1 & 38.4 \\
\midrule
LLaDA-8B-I (GPU ref), default & GoEmotions & 26.6 & 22.4 \\
LLaDA-8B-I (GPU ref), default & Reuters & 60.6 & 67.2 \\
LLaDA-8B-I (GPU ref), ``main topic'' & Reuters & 80.5 & 68.8 \\
\bottomrule
\end{tabular}
\end{table}

\noindent\textbf{Discussion.}
On GoEmotions, LLaDA2.0-mini has similar micro-F1 to LLaDA-8B-Instruct ($25.3$ vs $26.6$) and higher macro-F1 ($26.7$ vs $22.4$) in this small preliminary comparison. Architecture, quantisation, test-slice size, and checkpoint training differ, so the result cannot isolate an MoE effect.
On Reuters, the ``main topic'' template lifts micro from $56.8$ to $67.1$ ($+10.3$) and macro from $31.2$ to $38.4$ ($+7.2$), showing prompt sensitivity for this larger checkpoint on the 200-example slice.
Reuters macro ($38.4$) remains below LLaDA-8B-Instruct ($67.2$/$68.8$). Possible contributors include checkpoint training, calibration-slice size, and int4 quantisation; this experiment does not separate them.
An Instruct-tuned LLaDA2.0-mini, when released, would be the natural next experiment.
CPU inference is slow (${\sim}67$\,s / example on Reuters with the ``main topic'' template at int4, 20 labels) but runs in parallel with GPU workloads and requires no GPU memory.

\section{Reproducibility checklist.}
\label{app:repro}

\begin{itemize}[leftmargin=1.2em]
\item All code, configs, and saved score arrays released at \url{https://github.com/misterpawan/multilabel-classification-dllm-paper.git}.
\item Supervised checkpoints (BERT/RoBERTa/T5) on Hugging Face Hub.
\item All datasets used in our experiments are publicly accessible via standard sources such as Hugging Face Datasets or widely used public mirrors.
\item Single RTX 5090 (32 GB), bf16 inference, PyTorch 2.11 + cu128.
\item Three-seed reproducibility on headline cells in Section~\ref{app:multiseed}.
\item Every cell in the main table has a one-line CLI invocation.
\end{itemize}

\section{Full-gain JSR is monotone; tie-stable updates terminate.}
\label{app:fullgain}

\noindent\textbf{Proposition (full-gain JSR is monotone; tie-stable updates terminate).}\;
\textit{Fix an input $x$ and define the pseudo-likelihood surrogate
\[
\mathcal{PL}_\theta(y\mid x):=\sum_{i=1}^{m} \ell_{\theta,i}(y_i;x,y_{-i}),
\qquad y\in\{0,1\}^{m}.
\]
Consider the \emph{full-gain} coordinate update at step $t$:
\[
y_i^{(t+1)} \in \arg\max_{b\in\{0,1\}}
\mathcal{PL}_\theta\bigl(y_1^{(t+1)},\dots,y_{i-1}^{(t+1)}, b, y_{i+1}^{(t)},\dots,y_m^{(t)} \mid x\bigr),
\]
with the \emph{tie-stable} rule that if the current value $y_i$ is already an argmax, it is kept unchanged.
Then: (i) each coordinate update is monotone non-decreasing in $\mathcal{PL}_\theta$; (ii) every full sweep is monotone non-decreasing in $\mathcal{PL}_\theta$; (iii) because the state space $\{0,1\}^m$ is finite and tie updates do not move when no strict improvement is available, repeated sweeps terminate after finitely many coordinate changes at a coordinate-wise local optimum of $\mathcal{PL}_\theta$.
The local update Eq.~\ref{eq:jsr} does \textbf{not} have properties (i)--(iii): changing $y_i$ also alters every $\ell_{\theta,j}(y_j;x,y_{-j})$ for $j\ne i$ (because $y_i$ sits inside $y_{-j}$), so maximising the single term $\ell_{\theta,i}$ need not increase the sum (see \cref{app:jsr_counterex}).}

\noindent\textit{Proof.}\; Fix a step of the coordinate update for index $i$. By construction,
\[
y_i^{(t+1)} \in \arg\max_{b\in\{0,1\}}
\mathcal{PL}_\theta\bigl(y_1^{(t+1)},\dots,y_{i-1}^{(t+1)}, b, y_{i+1}^{(t)},\dots,y_m^{(t)} \mid x\bigr).
\]
Therefore $\mathcal{PL}_\theta$ after the $i$-th update is at least $\mathcal{PL}_\theta$ before it, which proves monotonicity of each coordinate update; summing across the coordinates in a sweep proves monotonicity of each full sweep.

For termination, note that the process can only change state when either (a) $\mathcal{PL}_\theta$ strictly increases, or (b) the surrogate stays unchanged but the chosen coordinate value changes to another maximiser.
Under the stated tie-stable rule, case (b) cannot occur when the current coordinate value is already optimal, because ties keep the current value unchanged.
Hence every actual state change strictly increases $\mathcal{PL}_\theta$.
Since $\{0,1\}^m$ is finite, only finitely many strict increases are possible, so the process terminates after finitely many coordinate changes.
At termination, no single coordinate can be changed to improve $\mathcal{PL}_\theta$ while holding the others fixed (otherwise the update would perform that strict improvement); the terminal point is therefore a coordinate-wise local optimum. \qed

\subsection{Counter-example: local-JSR can decrease
\texorpdfstring{$\mathcal{PL}_\theta$}{PL theta}.}
\label{app:jsr_counterex}
A two-label instance suffices.
Let $|\Lambda|=2$ with local scores
\begin{align*}
\ell_{\theta,1}(1; y_2) &= -0.1, & \ell_{\theta,1}(0; y_2) &= -1.0 \text{ for all } y_2,\\
\ell_{\theta,2}(1; y_1{=}0) &= -0.1, & \ell_{\theta,2}(1; y_1{=}1) &= -10.0,\\
\ell_{\theta,2}(0; \cdot) &= -0.2.
\end{align*}
Starting at $y=(0,1)$, the local update for coordinate $1$ (Eq.~\ref{eq:jsr}) picks $y_1=1$ because $-0.1 > -1.0$.
But
\[
\mathcal{PL}_\theta(0,1) = -1.0 + (-0.1) = -1.1,\quad \mathcal{PL}_\theta(1,1) = -0.1 + (-10.0) = -10.1,
\]
so the surrogate \emph{decreases} by $9.0$.
The full-gain update would instead pick $b\in\arg\max_b \mathcal{PL}_\theta(b, 1)$, namely $b=0$, leaving $y_1$ unchanged and preserving $\mathcal{PL}_\theta=-1.1$.
This example shows that local-JSR can decrease the surrogate $\mathcal{PL}_\theta$ and is therefore not coordinate ascent on $\mathcal{PL}_\theta$; this is the underlying reason the empirical local-JSR curves in \cref{tab:sweeps,tab:jsr_perlabel} move in the wrong direction.

\end{document}